\newcommand{\merl}{{MERL}}
\newcommand{\pac}{{PAC}}
\newcommand{\aO}[1]{\tilde O\left({#1} \right)}
\renewcommand{\H}{\mathcal H}
\newcommand{\unu}{\underline\nu}
\newcommand{\bnu}{\overline\nu}
\newcommand{\oU}{\overline U}
\newcommand{\uU}{\underline U}
\newcommand{\ceil}[1]{\left \lceil {#1} \right\rceil}
\renewcommand{\P}[1]{\operatorname{P}\left\{#1\right\}}
\newcommand{\argmax}{\operatornamewithlimits{arg\,max}}
\newcommand{\ind}[1]{[\![ #1 ]\!]}
\newcommand{\bigind}[1]{\left[\!\!\left[ #1 \right]\!\!\right]}
\newcommand{\prefix}{\sqsubset}
\newcommand{\E}{\mathbf E}
\newcommand{\K}{\mathcal K}
\newcommand{\F}{\mathcal F}
\newcommand{\M}{{\mathcal M}}
\newcommand{\R}[0]{\mathbb R}
\newcommand{\N}[0]{\mathbb N}
\newcommand{\constemaxkappa}{{ 2^{11}N \over \epsilon_\kappa^2(1 - \gamma)^2} \log^2{2^{9}N \over \epsilon^2(1 - \gamma)^2 \delta_1}}
\newcommand{\constgmaxkappa}{{ 2^{17}N  |\K| \over \epsilon \epsilon_\kappa(1 - \gamma)^2} \log^2{2^{9}N \over \epsilon^2(1 - \gamma)^2 \delta_1}}
\newcommand{\constemax}{{ 2^{16}N \over \epsilon^2(1 - \gamma)^2} \log^2{2^{9}N \over \epsilon^2(1 - \gamma)^2 \delta_1}}
\newcommand{\constgmax}{{ 2^{16}N |\K| \over \epsilon^2(1 - \gamma)^2} \log^2{2^{9}N \over \epsilon^2(1 - \gamma)^2 \delta_1}}
\newcommand{\constd}{{1 \over 1 - \gamma} \log{8 \over (1 - \gamma)\epsilon}}
\newcommand{\constalpha}{{4\sqrt{N} \over 4\sqrt{N} - 1}}
\newcommand{\constdeltaone}{{\delta \over 32 |\K| N^{3/2} }}
\newcommand{\constK}{{\log_2 {1 \over \epsilon(1 - \gamma)} + 2}}
\newcommand{\emaxkappa}{E_{\max,\kappa}}
\newcommand{\gmax}{G_{\max}}
\newcommand{\gmaxkappa}{G_{\max,\kappa}}
\newcommand{\emax}{E_{\max}}
\newcommand{\eqr}[1]{Equation (\ref{#1})}
\newcommand{\eqn}[1]{\begin{align}#1\end{align}}
\newcommand{\eq}[1]{\begin{align*}#1\end{align*}}
\theoremstyle{plain}
\newtheorem{theorem}{Theorem}
\newtheorem{lemma}[theorem]{Lemma}
\theoremstyle{definition}
\theoremstyle{remark}
\renewcommand{\qedsymbol}{$\blacksquare$}
\newenvironment{proofof}[1]{\par\vspace{1mm}\noindent{\bfseries\upshape Proof#1.}}{\hfill\qedsymbol \par\vspace{2mm}\noindent\ignorespacesafterend}
\renewenvironment{proof}{\par\vspace{1mm}\noindent{\bfseries\upshape Proof.}}{\hfill\qedsymbol \par\vspace{2mm}\noindent\ignorespacesafterend}
\newenvironment{keywords}{\centerline{\bf\small
Keywords}\begin{quote}\small}{\par\end{quote}\vskip 1ex}
\def\subsubsect#1{\vspace{1ex plus 0.5ex minus 0.5ex}\noindent{\bf\boldmath{#1.}}}
\begin{document}

\title{
\vskip 2mm\bf\Large\hrule height5pt \vskip 4mm
The Sample-Complexity of General Reinforcement Learning 
\vskip 4mm \hrule height2pt}
\author{{\bf Tor Lattimore} and {\bf Marcus Hutter} and {\bf Peter Sunehag} \\[3mm]
\normalsize Research School of Computer Science \\[-0.5ex] 
\normalsize Australian National University \\[-0.5ex]
\normalsize\texttt{\{tor.lattimore,marcus.hutter,peter.sunehag\}@anu.edu.au}
}
\date{July 2013}

\maketitle

\begin{abstract}
We present a new algorithm for general 
reinforcement learning where the true environment is known to belong to a finite class of $N$ arbitrary models.
The algorithm is shown to be near-optimal for all but $O(N \log^2 N)$ time-steps with high probability.
Infinite classes are also considered where we show that compactness is a key criterion for determining the
existence of uniform sample-complexity bounds. A matching lower bound is given for the finite case.

\def\contentsname{\centering\normalsize Contents}
{\parskip=-2.7ex\tableofcontents}

\end{abstract}

\begin{keywords} 
Reinforcement learning;
sample-complexity;
exploration exploitation.
\end{keywords}

\newpage

\section{Introduction}

Reinforcement Learning (RL) is the task of learning policies that lead to nearly-optimal rewards where the environment is unknown.
One metric of the efficiency of an RL algorithm is sample-complexity, which is a high probability upper bound on the number of
time-steps when that algorithm is not nearly-optimal
that holds for all environment in some class. Such bounds are typically shown for very specific classes of environments,
such as (partially observable/factored) Markov Decision Processes (MDP) and bandits. We consider more general classes
of environments where at each time-step an agent takes an action $a \in A$ where-upon it receives reward $r \in [0,1]$ and
an observation $o \in O$, which are generated stochastically by the environment and may depend arbitrarily on the entire history sequence.
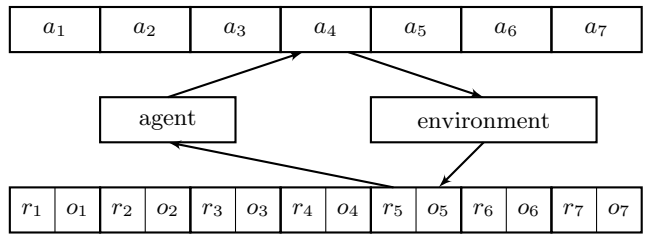
\begin{figure}[H]\label{fig_env}
\centering
\small
\begin{tikzpicture}[>=latex',scale=0.6,thick]
\tikzstyle{empty}=[draw=none]
\foreach \x in {1,2,3,4,5,6,7} {
  \draw (2*\x,1) rectangle (2*\x+2,2);
  \draw (2*\x,5) rectangle (2*\x+2,6);
  \draw[thin] (2*\x+1,1) -- (2*\x+1,2);

  \node[empty] at (2*\x+1,5.5) {$a_\x$};
  \node[empty] at (2*\x+0.5,1.5) {$r_\x$};
  \node[empty] at (2*\x+1.5,1.5) {$o_\x$};
}

\draw (4,3) rectangle (7, 4);
\draw (10,3) rectangle (15, 4);

\node[empty] at (5.5,3.5) {agent};
\node[empty] at (12.5,3.5) {environment};

\draw[->] (5.5,4) -- (8.5,5);
\draw[->] (9.5,5) -- (12.5,4);

\draw[->] (12.5,3) -- (11.5,2);
\draw[->] (10.5,2) -- (5.5,3);

\end{tikzpicture}
\caption{Agent/Environment Interaction}
\end{figure}

We present a new reinforcement learning algorithm, named Maximum Exploration Reinforcement Learning (\merl{}), that accepts as input a finite set 
$\M := \left\{\nu_1,\cdots, \nu_N\right\}$ of arbitrary environments, an accuracy $\epsilon$, and a confidence $\delta$. 
The main result is that \merl{} has a sample-complexity of
\eq{
\aO{{N \over \epsilon^2(1 - \gamma)^3} \log^2{N \over \delta \epsilon (1 - \gamma)}},
}
where $1/(1 - \gamma)$ is the effective horizon determined by discount rate $\gamma$.
We also consider the case where $\M$ is infinite, but compact with respect to a particular topology. In this case, a variant of \merl{} has
the same sample-complexity as above, but where $N$ is replaced by the size of the smallest $\epsilon$-cover.
A lower bound is also given that matches the upper bound except for logarithmic factors. Finally, if $\M$ is
non-compact then in general no finite sample-complexity bound exists.

\subsubsect{Related work}
Many authors have worked on the sample-complexity of RL in various settings. The simplest case is the multiarmed bandit
problem that has been extensively studied with varying assumptions. The typical measure of efficiency in the bandit
literature is regret, but sample-complexity bounds are also known and sometimes used. 
The next step from bandits is finite state MDPs, of which bandits are an example with only a single state.
There are two main settings when MDPs are considered, the discounted case where sample-complexity bounds are proven and the 
undiscounted (average reward) case where regret bounds are more typical. 
In the discounted setting the upper and lower bounds on sample-complexity are now extremely refined.
See \citet{Str09} for a detailed review of the popular algorithms and theorems. More recent work on closing the gap
between upper and lower bounds is by \citet{SS10,LH12,AMK12}. In the undiscounted case it is necessary to make some form of ergodicity
assumption as without this regret bounds cannot be given. In this work we avoid ergodicity assumptions and discount future 
rewards. Nevertheless, our algorithm borrows some tricks used by UCRL2 \cite{AJO10}.
Previous work for more general environment classes is somewhat limited. For factored MDPs there are known bounds, see
\cite{CS11} and references there-in.
\citet{EKM05} give essentially unimprovable exponential bounds
on the sample-complexity of learning in finite partially observable MDPs.
\citet{MNOR13} show regret bounds for undiscounted RL where the true environment is assumed to be finite, Markov and
communicating, but where the state is not directly observable.  
As far as we know there has been no work on the sample-complexity of RL when environments are completely general, but
asymptotic results have garnered some attention with
positive results by \citet{Hut02,RH08,HS12} and (mostly) negative ones by \citet{HL11b}.
Perhaps the closest related worked is \cite{DLL09}, which deals with a similar problem in the rather different setting of learning
the optimal predictor from a class of $N$ experts. They obtain an $O(N \log N)$ bound, which is applied
to the problem of structure learning for discounted finite-state factored MDPs. Our work generalises this approach to the non-Markov case
and compact model classes.

\section{Notation}

The definition of environments is borrowed from the work of \citet{Hut05}, although the notation is slightly more formal to ease the application
of martingale inequalities. 

\subsubsect{General}
$\N = \left\{0,1,2,\cdots\right\}$ is the natural numbers. For the indicator function we write $\ind{x = y} = 1$ if $x = y$ and $0$ otherwise.
We use $\wedge$ and $\vee$ for logical and/or respectively. If $A$ is a set then $|A|$ is its size and $A^*$ is the set of all finite strings (sequences) 
over $A$.
If $x$ and $y$ are sequences then $x \prefix y$ means that $x$ is a prefix of $y$.
Unless otherwise mentioned, $\log$ represents
the natural logarithm. For random variable $X$ we write $\E X$ for its expectation. For $x \in \R$, $\ceil{x}$ is the ceiling function.

\subsubsect{Environments and policies}
Let $A$, $O$ and $R \subset \R$ be finite sets of actions, observations and rewards respectively and $\H := A \times O \times R$. 
$\H^\infty$ is the set of infinite history sequences while $\H^* := (A\times O\times R)^*$ is
the set of finite history sequences. If $h \in \H^*$ then $\ell(h)$ is the number of action/observation/reward tuples in $h$.
We write $a_t(h)$, $o_t(h)$, $r_t(h)$ for the $t$th action/observation/reward of history sequence $h$.
For $h \in \H^*$, $\Gamma_{h} := \left\{h' \in \H^\infty : h \prefix h'\right\}$ is the cylinder set.
Let $\F := \sigma(\{\Gamma_{h} : h \in \H^*\})$ and $\F_t := \sigma(\{\Gamma_h : h \in \H^* \wedge \ell(h) = t\})$ be $\sigma$-algebras.
An environment $\mu$ is a set of conditional probability distributions over observation/reward pairs given the history so far.
A policy $\pi$ is a function $\pi : \H^* \to A$.
An environment and policy interact sequentially to induce a measure, $P_{\mu,\pi}$, on filtered probability space $(\H^\infty, \F, \left\{\F_t\right\})$. 
For convenience, we abuse notation and write $P_{\mu,\pi}(h) := P_{\mu,\pi}(\Gamma_{h})$.
If $h \prefix h'$ then conditional probabilities are $P_{\mu,\pi}(h'|h) := P_{\mu,\pi}(h') / P_{\mu,\pi}(h)$.
$R_t(h;d) := \sum_{k=t}^{t+d} \gamma^{k-t} r_k(h)$ is the $d$-step return function and $R_t(h) := \lim_{d \to\infty} R_t(h;d)$.
Given history $h_t$ with $\ell(h_t) = t$, the value function is defined by
$V^\pi_\mu(h_t;d) := \E[R_{t}(h;d)|h_t]$
where the expectation is taken with respect to $P_{\mu,\pi}(\cdot | h_t)$. 
$V^\pi_\mu(h_t) := \lim_{d\to\infty} V^\pi_\mu(h_t;d)$.
The optimal policy for environment $\mu$ is $\pi^*_\mu := \argmax_\pi V^\pi_\mu$, which with our assumptions is known to exist 
\cite{HL11}.
The value of the optimal policy
is $V^*_\mu := V^{\pi^*_\mu}_\mu$. 
In general, $\mu$ denotes the true environment while $\nu$ is a model. $\pi$ will typically
be the policy of the algorithm under consideration. $Q^*_\mu(h, a)$ is the value in history $h$ of following policy $\pi^*_\mu$ except
for the first time-step when action $a$ is taken. $\M$ is a set of environments (models).

\subsubsect{Sample-complexity}
Policy $\pi$ is $\epsilon$-optimal in history $h$ and environment $\mu$ if
$V^*_\mu(h) - V^\pi_\mu(h) \leq \epsilon$. 
The sample-complexity of a policy $\pi$ in environment class $\M$ is the smallest $\Lambda$ such that,
with high probability, $\pi$ is $\epsilon$-optimal for all but $\Lambda$ time-steps for all $\mu \in \M$.
Define $L_{\mu,\pi}^\epsilon:\H^\infty \to \N \cup \left\{\infty\right\}$ to be
the number of time-steps when $\pi$ is not $\epsilon$-optimal.
\eq{
L_{\mu,\pi}^\epsilon(h) := \sum_{t=1}^\infty \bigind{V^*_\mu(h_t) - V^\pi_\mu(h_t) > \epsilon},
}
where $h_t$ is the length $t$ prefix of $h$. The sample-complexity of policy $\pi$ is $\Lambda$ with respect to accuracy $\epsilon$ and 
confidence $1 - \delta$ if
$\P{L_{\mu,\pi}^\epsilon(h) > \Lambda} < \delta, \forall \mu \in \M$.

\section{Finite Case}

We start with the finite case where the true environment is known to belong to a finite set of models, $\M$.
The Maximum Exploration Reinforcement Learning algorithm is model-based in the sense that it maintains a set, $\M_t \subseteq \M$, where models
are eliminated once they become implausible. The algorithm operates in phases of exploration and exploitation, 
choosing to exploit if it knows all plausible environments are reasonably close under all optimal policies and explore otherwise. 
This method of exploration essentially guarantees that \merl{} is 
nearly optimal whenever it is exploiting and the number of exploration phases is limited with high probability.
The main difficulty is specifying what it means to be plausible. Previous authors working on finite environments, such as 
MDPs or bandits, have removed models for which the transition probabilities are not sufficiently close to their 
empirical estimates. In the more general setting this approach fails because states (histories) are never visited
more than once, so sufficient empirical estimates cannot be collected. Instead, we eliminate
environments if the reward we actually collect over time is not sufficiently close to the reward we expected given that environment.

Before giving the explicit algorithm, we explain the operation of \merl{} more formally in two parts. 
First we describe how it chooses to explore and exploit and then how the model class is maintained. See Figure \ref{fig_explore_exploit}
for a diagram of how exploration and exploitation occurs.

\subsubsect{Exploring and exploiting}
At each time-step $t$ \merl{} computes the pair of environments $\unu,\bnu$ in the model class $\M_t$ and the 
policy $\pi$ maximising the difference 
\eq{
\Delta := V^\pi_{\bnu}(h;d) - V^\pi_{\unu}(h;d), \quad d := \constd.
}
If $\Delta > \epsilon / 4$, then \merl{} follows policy $\pi$ for $d$ time-steps, which we call an exploration phase. 
Otherwise, for one time-step it follows the optimal policy
with respect to the first environment currently in the model class. Therefore, if \merl{} chooses to exploit, then all
policies and environments in the model class lead to similar values, which implies that exploiting is near-optimal. If
\merl{} explores, then either $V^\pi_{\bnu}(h;d) - V^\pi_{\mu}(h;d) > \epsilon/8$ or $V^\pi_{\mu}(h;d) - V^\pi_{\unu}(h;d) > \epsilon/8$,
which will allow us to apply concentration inequalities to eventually eliminate either $\bnu$ (the upper bound) or $\unu$ 
(the lower bound).

\subsubsect{The model class}
An exploration phase is a $\kappa$-exploration phase if $\Delta \in [\epsilon 2^{\kappa - 2}, \epsilon 2^{\kappa - 1})$, where
\eq{
\kappa \in \K := \left\{0,1,2,\cdots,\log_2 {1 \over \epsilon(1-\gamma)} + 2 \right\}.
}
For each environment $\nu \in \M$ and each $\kappa \in \K$, \merl{} associates a counter $E(\nu,\kappa)$, which is
incremented at the start of a $\kappa$-exploration phase if $\nu \in \left\{\unu,\bnu\right\}$.
At the end of each $\kappa$-exploration phase \merl{}
calculates the discounted return actually received during that exploration phase $R \in [0,1/(1 - \gamma)]$ and records the values
\eq{
X(\bnu,\kappa) &:= (1 - \gamma)(V^\pi_{\bnu}(h;d) - R) \\ 
X(\unu,\kappa) &:= (1 - \gamma)(R - V^\pi_{\unu}(h;d)), 
}
where $h$ is the history at the start of the exploration phase. So $X(\bnu,\kappa)$ is the difference between
the return expected if the true model was $\bnu$ and the actual return and $X(\unu,\kappa)$ is the difference between
the actual return and the expected return if the true model was $\unu$. Since the expected value of $R$ is $V^\pi_{\mu}(h;d)$,
and $\bnu$,$\unu$ are upper and lower bounds respectively, the expected values of both $X(\bnu,\kappa)$ and $X(\unu,\kappa)$
are non-negative and at least one of them has expectation larger than $(1 - \gamma)\epsilon / 8$. 

\merl{} eliminates environment $\nu$ from the model class if the cumulative sum of $X(\nu,\kappa)$ over all exploration
phases where $\nu \in \{\unu,\bnu\}$ is sufficiently large, but it tests this condition only when the counts $E(\nu,\kappa)$
has increased enough since the last test. 
Let $\alpha_j := \ceil{\alpha^j}$ for $\alpha \in (1, 2)$ as defined in the algorithm.
\merl{} only tests if $\nu$ should be removed from the model class when $E(\nu,\kappa) = \alpha_j$ for some $j \in \N$.
This restriction ensures that tests are not performed too often, which allows us to apply the union bound without losing too much.
Note that if the true environment $\mu \in \{\bnu,\unu\}$, then 
$\E_{\mu,\pi} X(\mu,\kappa) = 0$, which will ultimately be enough to ensure that $\mu$ remains in the model class with high
probability. The reason for using $\kappa$ to bucket exploration phases will become apparent later in the proof of 
Lemma \ref{lem_exploit_errors}.

\begin{algorithm}[H]
\caption{\merl{}}
\small
\begin{algorithmic}[1]
\STATE {\bf Inputs:} $\epsilon$, $\delta$ and $\M := \left\{\nu_1, \nu_2,\cdots,\nu_N\right\}$.
\STATE $t = 1$ and $h$ empty history
\STATE $d := \constd$, $\delta_1 := \constdeltaone$
\STATE $\alpha := \constalpha$ and $\alpha_j := \ceil{\alpha^j}$
\STATE $E(\nu,\kappa) := 0, \quad \forall \nu \in \M$ and $\kappa \in \N$
\LOOP
\REPEAT
\STATE $\Pi := \left\{\pi^*_\nu : \nu \in \M\right\}$
\STATE $\displaystyle{\bnu,\unu,\pi := \argmax_{\bnu,\unu \in \M, \pi \in \Pi} V^\pi_{\bnu}(h;d) - V^\pi_{\unu}(h;d)}$
\IF {$\Delta := V^{\pi}_{\bnu}(h;d) - V^{\pi}_{\unu}(h;d) > \epsilon / 4$}
\STATE $\tilde h = h$ and $R = 0$
\FOR{$j = 0 \to d$}
\STATE $R = R + \gamma^j r_t(h)$
\STATE {\scshape Act}($\pi$)
\ENDFOR
\STATE $\kappa := \min\left\{\kappa \in \N : \Delta > \epsilon 2^{\kappa-2}\right\}$.
\STATE $E(\unu,\kappa) = E(\unu,\kappa) + 1$ and $E(\bnu,\kappa) = E(\bnu,\kappa) + 1$
\STATE $X(\bnu,\kappa)_{E(\bnu,\kappa)} = (1 - \gamma)(V_{\bnu}^\pi(\tilde h;d) - R)$
\STATE $X(\unu,\kappa)_{E(\unu,\kappa)} = (1 - \gamma)(R - V_{\unu}^\pi(\tilde h;d))$
\ELSE
\STATE $i := \min \left\{i : \nu_i \in \M\right\}$ and {\scshape Act}($\pi^*_{\nu_i}$)
\ENDIF
\UNTIL{$\exists \nu \in \M, \kappa,j \in \N$ such that $E(\nu,\kappa) = \alpha_j$ and 
\eq{
\sum_{i=1}^{E(\nu,\kappa)} X(\nu,\kappa)_i \geq \sqrt{2E(\nu,\kappa) \log{E(\nu,\kappa) \over \delta_1}}
.}}
\STATE $\M = \M - \left\{\nu\right\}$
\ENDLOOP
\STATE {\bf function} {\scshape Act}($\pi$)
\STATE \hspace{0.25cm} Take action $a_t = \pi(h)$ and receive reward and observation $r_t, o_t$ from environment
\STATE \hspace{0.25cm} $t \leftarrow t + 1$ and $h \leftarrow ha_to_tr_t$
\STATE {\bf end function}
\end{algorithmic}
\end{algorithm}

\subsubsect{Subscripts}
For clarity, we have omitted subscripts in the pseudo-code above. In the analysis we will refer to $E_t(\nu,\kappa)$ and $\M_t$ for the
values of $E(\nu,\kappa)$ and $\M$ respectively at time-step $t$. We write $\nu_t$ for $\nu_i$ in line 21 and similarly $\pi_t := \pi^*_{\nu_t}$.

\subsubsect{Phases}
An {\it exploration phase} is a period of exactly $d$ time-steps, starting at time-step $t$ if
\begin{enumerate}
\item $t$ is not currently in an exploration phase.
\item $\Delta := V^{\pi}_{\bnu}(h_t;d) - V^{\pi}_{\unu}(h_t;d) > \epsilon / 4$.
\end{enumerate}
We say it is a $\nu$-exploration phase if 
$\nu = \unu$ or $\nu = \bnu$ and a $\kappa$-exploration phase if $\Delta \in [\epsilon 2^{\kappa-2}, \epsilon 2^{\kappa-1})
\equiv [\epsilon_\kappa, 2\epsilon_\kappa)$ where $\epsilon_\kappa := \epsilon2^{\kappa - 2}$.
It is a $(\nu,\kappa)$-exploration phase if it satisfies
both of the previous statements. We say that \merl{} is {\it exploiting} at time-step $t$ if $t$ is not in an exploration phase. 
A {\it failure} phase is also a period of $d$ time-steps and starts in time-step $t$ if
\begin{enumerate}
\item $t$ is not in an exploration phase or earlier failure phase
\item $V^*_\mu(h_t) - V^\pi_\mu(h_t) > \epsilon$.
\end{enumerate}
Unlike exploration phases, the algorithm does not depend on the failure phases, which are only used in the analysis,
An exploration or failure phase starting at time-step $t$ is {\it proper} if $\mu \in \M_t$.
The effective horizon $d$ is chosen to ensure that $V^\pi_\mu(h;d) \geq V^\pi_\mu(h) - \epsilon / 8$ for all $\pi$, $\mu$ and $h$.

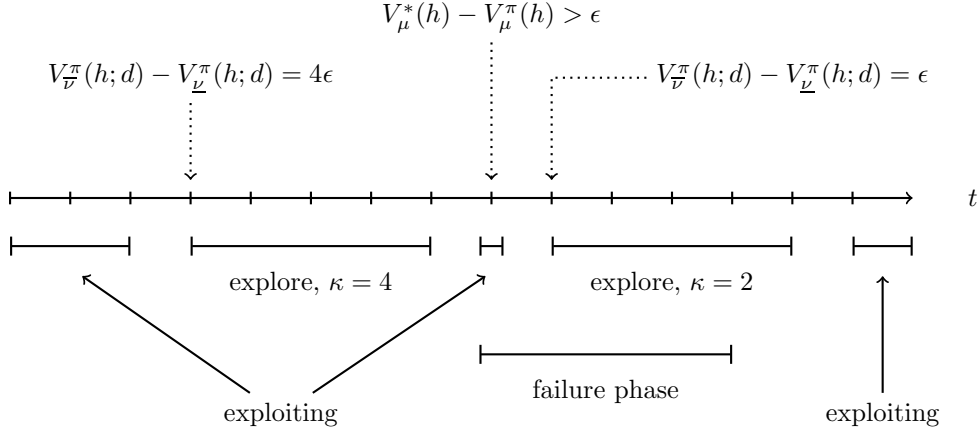
\begin{figure}[H]
\centering
\begin{tikzpicture}[thick,scale=0.8]
\tikzstyle{arrow} = [->,=>stealth']
\draw (0,0) edge[->] (15,0);
\foreach \n in {0,...,14} {
	\draw (\n,-3pt) -- (\n,3pt);
}
\node at (16,0) {$t$};

\node (explore1) at (3,2) {$V^\pi_{\bnu}(h;d) - V^\pi_{\unu}(h;d) = 4\epsilon$};
\node (explore2) at (13,2) {$V^\pi_{\bnu}(h;d) - V^\pi_{\unu}(h;d) = \epsilon$};

\node (failure) at (8,3) {$V^*_\mu(h) - V^\pi_\mu(h) > \epsilon$};

\draw[arrow,dotted] (explore2) -| (9,0.3);
\draw[arrow,dotted] (explore1) -- (3,0.3);
\draw[arrow,dotted] (failure) -- (8,0.3);

\draw(0,-0.8) edge[|-|] (2,-0.8);
\draw(3,-0.8) edge[|-|] node[below=6pt] {explore, $\kappa = 4$} (7,-0.8);
\draw(7.8,-0.8) edge[|-|] (8.2,-0.8);
\draw(9,-0.8) edge[|-|] node[below=6pt] {explore, $\kappa = 2$} (13,-0.8);
\draw(14,-0.8) edge[|-|] (15,-0.8);

\draw(7.8,-2.6) edge[|-|] node[below=6pt] {failure phase} (12,-2.6);

\node (exploit) at(4.5,-3.6) {exploiting};
\node (exploit2) at(14.5,-3.6) {exploiting};

\draw (exploit) edge[->] (1.2,-1.3); 
\draw (exploit) edge[->] (7.9,-1.3); 
\draw (exploit2) edge[->] (14.5,-1.3); 

\end{tikzpicture}

\caption{Exploration/exploitation/failure phases, $d = 4$}
\label{fig_explore_exploit}

\end{figure}

\subsubsect{Test statistics}
We have previously remarked that most traditional model-based algorithms with sample-complexity guarantees record statistics about the transition probabilities of an environment. Since the environments
are assumed to be finite, these statistics eventually become accurate (or irrelevant) and the standard theory on the concentration of measure can be used for hypothesis testing.
In the general case, environments can be infinite and so we cannot collect useful statistics about individual transitions. Instead, we use the statistics $X(\nu,\kappa)$, which are dependent on the value
function rather than individual transitions.
These satisfy $\E_{\mu,\pi}[X(\mu,\kappa)_i] = 0$ while $\E_{\mu,\pi}[X(\nu,\kappa)_i] \geq 0$ for all $\nu \in \M_t$. Testing is then performed on 
the statistic $\sum_{i=1}^{\alpha_k} X(\nu,\kappa)_i$, which will satisfy certain martingale inequalities.

\subsubsect{Updates}
As \merl{} explores, it updates its model class, $\M_t \subseteq \M$, by removing environments that have become implausible.
This is comparable to the updating of confidence intervals for algorithms such as MBIE (\citealp{LS05}) or UCRL2 (\citealp{AJO10}).
In MBIE, the confidence interval about the empirical estimate of a transition probability is updated after every observation. 
A slight theoretical improvement used by UCRL2 is to only update when the number of samples of a particular statistic doubles.
The latter trick allows a cheap application of the union bound over all updates without wasting too many samples.
For our purposes, however, we need to update slightly more often than the doubling trick would allow. Instead,
we check if an environment should be eliminated if the number of $(\nu,\kappa)$-exploration phases is exactly $\alpha_j$ for some $j$ where
$\alpha_j := \ceil{\alpha^j}$ and $\alpha := \constalpha \in (1, 2)$. Since the growth of $\alpha_j$ is still exponential, the union
bound will still be applicable.

\subsubsect{Probabilities} For the remainder of this section, unless otherwise mentioned, all probabilities and expectations are with respect to $P_{\mu,\pi}$ where $\pi$ is the policy of 
Algorithm 1 and $\mu \in \M$ is the true environment.

\subsubsect{Analysis}
Define $\gmax := \constgmax$ and $\emax := \constemax$, which are high probability bounds on the number of 
failure and exploration phases respectively. 
\begin{theorem}\label{thm_main}
Let $\mu \in \M = \left\{\nu_1,\nu_2,\cdots \nu_N\right\}$ be the true environment and $\pi$ be the policy of Algorithm 1. Then
\eq{
\P{L_{\mu,\pi}^\epsilon(h) \geq  d\cdot (\gmax + \emax)} \leq \delta.
}
\end{theorem}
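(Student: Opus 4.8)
The plan is to reduce the probabilistic statement to a deterministic counting bound plus two high-probability bounds on the number of phases. First I would record the deterministic observation underpinning everything: every time-step at which \merl{} is not $\epsilon$-optimal lies inside either an exploration phase or a failure phase. Indeed, if $V^*_\mu(h_t) - V^\pi_\mu(h_t) > \epsilon$ and $t$ is not already inside an exploration phase, then by the definition of a failure phase $t$ either continues a preceding failure phase or starts a new one. Since each phase has length exactly $d$, writing $E$ and $F$ for the total numbers of exploration and failure phases this gives the sure bound $L_{\mu,\pi}^\epsilon(h) \le d\,(E + F)$. It then suffices to exhibit a high-probability event on which $E \le \emax$ and $F \le \gmax$ simultaneously, whose complement has probability at most $\delta$; the theorem follows since $\{L_{\mu,\pi}^\epsilon(h) \ge d(\gmax + \emax)\}$ forces $E > \emax$ or $F > \gmax$.

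Next I would construct that event as the intersection of two pieces. The first is the event $A$ that the true model is never eliminated. Since $\E_{\mu,\pi}[X(\mu,\kappa)_i] = 0$ and the increments are bounded, the partial sums $\sum_i X(\mu,\kappa)_i$ form a martingale, so a maximal inequality bounds the chance they ever reach the test threshold $\sqrt{2E(\mu,\kappa)\log(E(\mu,\kappa)/\delta_1)}$; because $\mu$ is tested only at the geometric times $E(\mu,\kappa) = \alpha_j$, a union bound over $j$ and over $\kappa \in \K$ keeps $\P{A^c}$ below $\delta/2$ for the stated $\delta_1$. The second is the event $B$ that every other model is ejected within its budget: in a proper $(\nu,\kappa)$-exploration phase at least one of $\E_{\mu,\pi}X(\bnu,\kappa)_i,\E_{\mu,\pi}X(\unu,\kappa)_i$ is at least $(1-\gamma)\epsilon/8$ (indeed of order $(1-\gamma)\epsilon_\kappa$), so the matching lower-tail concentration shows its running sum crosses the threshold after $\O{\epsilon_\kappa^{-2}(1-\gamma)^{-2}\log^2(1/\delta_1)}$ such phases unless an event of probability $\delta_1$ occurs. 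Summing this budget over the $N$ models and over the geometrically shrinking $\kappa$-buckets, using $\sum_\kappa \epsilon_\kappa^{-2} = \O{\epsilon^{-2}}$, bounds the number of proper exploration phases by $\emax$; on $A$ all exploration phases are proper, so $E \le \emax$ on $A \cap B$, and the same union bound gives $\P{B^c} \le \delta/2$.

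Finally I would dispatch the failure phases using Lemma \ref{lem_exploit_errors}. Whenever \merl{} exploits, $\Delta \le \epsilon/4$, which forces any two surviving models to agree within $\epsilon/4$ under every policy in $\Pi$; chaining this through the horizon estimate $V^\pi_\mu(h;d) \ge V^\pi_\mu(h) - \epsilon/8$ and the optimality of $\pi^*_{\nu_i}$ for $\nu_i$ shows that, while $\mu \in \M_t$, the exploited policy is $\epsilon$-optimal for $\mu$. Hence on $A$ no failure phase can begin and $F = 0 \le \gmax$, so on $A \cap B$ we have $L_{\mu,\pi}^\epsilon(h) \le d\,\emax \le d(\gmax + \emax)$ while $\P{(A\cap B)^c} \le \delta$, completing the argument. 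I expect the martingale step to be the main obstacle: the pair $\unu,\bnu$ and the policy $\pi$ are selected adaptively from the observed history, so the $X(\nu,\kappa)_i$ are martingale differences rather than independent samples, and one must establish both directions---retaining $\mu$ and ejecting every impostor---uniformly over the data-dependent test times $\alpha_j$ while the union bound over models, over $\kappa$, and over tests erodes neither the confidence nor the constants beyond the stated $\emax$ and $\gmax$.
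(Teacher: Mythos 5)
Your first two steps track the paper's own proof: the deterministic counting argument (every non-$\epsilon$-optimal time-step lies in an exploration or failure phase, each of length exactly $d$) and the two high-probability events---the true model is never eliminated (the paper's Lemma \ref{lem_model}) and the number of proper exploration phases is at most $\emax$ (the paper's Lemma \ref{lem_explore})---are exactly the paper's structure. The gap is in your third paragraph, and it is fatal as stated: you claim that on the event $A$ (i.e.\ $\mu \in \M_t$ for all $t$) no failure phase can begin, so $F = 0$. This is false, and the paper explicitly warns against this very slip. What your chaining argument establishes is the paper's Lemma \ref{lem_exploit}: while exploiting with $\mu \in \M_t$, the policy $\pi_t = \pi^*_{\nu_t}$ satisfies $V^*_\mu(h_t) - V^{\pi_t}_\mu(h_t) \leq 5\epsilon/8$. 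But a failure phase is defined by $V^*_\mu(h_t) - V^{\pi}_\mu(h_t) > \epsilon$ where $\pi$ is \merl{}'s \emph{actual} policy, and $\pi$ coincides with $\pi_t$ only until the trigger $\Delta > \epsilon/4$ fires. If, from $h_t$, there is a non-negligible probability of entering an exploration phase within the next $d$ steps, then $V^{\pi_t}_\mu(h_t) - V^{\pi}_\mu(h_t)$ can exceed $3\epsilon/8$ (the exploration policy maximizes a value \emph{difference} between models, not value, and can be arbitrarily poor for $\mu$), so a failure phase can start even though $\mu \in \M_t$ and \merl{} is exploiting. This is precisely why $\gmax$ is nonzero and in fact exceeds $\emax$ by a factor of order $2^{\kappa}|\K|$, and why the paper calls Lemma \ref{lem_exploit_errors} ``intuitively unsurprising, but technically difficult.''

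Closing this gap is the content of Lemma \ref{lem_exploit_errors}, which you cite but never actually invoke: one shows that a proper failure phase forces $V^{\pi_t}_\mu(h) - V^{\pi}_\mu(h) \geq 3\epsilon/8$, decomposes this difference over the histories that trigger $\kappa$-exploration phases within horizon $d$, deduces that for some $\kappa$ the conditional probability of triggering a $\kappa$-exploration phase is at least $2^{-\kappa-3}/|\K|$, and then applies Azuma's inequality to the indicators of actual triggers: if more than $\gmaxkappa := 2^{\kappa+4}|\K|\emaxkappa$ such failure starts occurred, then with high probability more than $\emaxkappa$ $\kappa$-exploration phases would actually occur, contradicting the exploration bound. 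So the failure count needs its own probabilistic argument tied back to the exploration count, not the identity $F = 0$; moreover your confidence budget $\P{A^c} + \P{B^c} \leq \delta/2 + \delta/2$ leaves no room for this third event (the paper splits $\delta/4 + \delta/4 + \delta/2$). A secondary soft spot: in your event $B$ you assume each eliminated model's statistic has expectation $\Omega((1-\gamma)\epsilon_\kappa)$ in each of its exploration phases, but the algorithm cannot tell which of $\bnu, \unu$ is the effective one in any given phase; the paper needs the pigeonhole argument of Lemma \ref{lem_tech1} to extract a single model that is effective in a sufficiently large fraction of its own phases before the martingale concentration can be applied.
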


If lower order logarithmic factors are dropped then the sample-complexity bound of \merl{} given by Theorem \ref{thm_main} is $\aO{{N \over \epsilon^2(1 - \gamma)^3} \log^2{N \over \delta\epsilon(1 - \gamma)}}$.
Theorem \ref{thm_main} follows from three lemmas.

\begin{lemma}\label{lem_model}
$\mu \in \M_t$ for all $t$ with probability $1 - \delta / 4$.
\end{lemma}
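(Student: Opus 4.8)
The plan is to observe that the true environment $\mu$ can only leave the model class through the elimination test in the loop of Algorithm 1, so it suffices to bound the probability that, for some $\kappa \in \K$ and some test point $E(\mu,\kappa) = \alpha_j$, the partial sum $\sum_{i=1}^{\alpha_j} X(\mu,\kappa)_i$ exceeds the threshold $\sqrt{2\alpha_j \log(\alpha_j/\delta_1)}$. The central observation is that for the true environment the recorded statistics $X(\mu,\kappa)_i$ form a bounded martingale difference sequence: whenever a $\kappa$-exploration phase starts at history $\tilde h$ under policy $\pi$ with $\mu \in \{\bnu,\unu\}$, the actual discounted return $R$ collected over the next $d$ steps satisfies $\E_{\mu,\pi}[R \mid \tilde h] = V^\pi_\mu(\tilde h;d)$ by definition of the value function. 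Hence $\E_{\mu,\pi}[X(\mu,\kappa)_i \mid \F] = 0$ whether $\mu$ plays the role of $\bnu$ or of $\unu$, and since both $R$ and $V^\pi_\mu(\tilde h;d)$ lie in $[0,1/(1-\gamma)]$ we have $X(\mu,\kappa)_i \in [-1,1]$.

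The step I expect to require the most care is making this martingale structure precise, because the phases that increment $E(\mu,\kappa)$ occur at random times and the statistic is only recorded while $\mu \in \M_t$. I would handle this by indexing on the count $i$ rather than on wall-clock time: let $\tau_i$ be the start of the $i$th $\kappa$-exploration phase in which $\mu \in \{\bnu,\unu\}$, set $\mathcal G_i := \F_{\tau_i + d}$, and continue the sequence after any elimination with fictitious zero-mean, $[-1,1]$-valued draws, so that $(X(\mu,\kappa)_i)_{i\ge 1}$ is defined and is a genuine martingale difference sequence for all $i \in \N$ with respect to $(\mathcal G_i)$. On the event that the partial sums of this extended sequence stay below the threshold at every test point, $\mu$ is never eliminated, so bounding that event is enough.

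Given the martingale structure, the remainder is concentration plus union bounds. For a fixed $\kappa$ and a fixed test point $n$, Azuma--Hoeffding gives $\P{\sum_{i=1}^n X(\mu,\kappa)_i \geq \sqrt{2n\log(n/\delta_1)}} \leq \exp(-\log(n/\delta_1)) = \delta_1/n$. Since tests occur only at $n = \alpha_j = \ceil{\alpha^j} \geq \alpha^j$, a union bound over $j \in \N$ contributes $\sum_{j\ge 0}\delta_1/\alpha_j \leq \delta_1 \sum_{j\ge 0}\alpha^{-j} = \delta_1/(1 - 1/\alpha) = 4\sqrt{N}\,\delta_1$, where the geometric sum is evaluated using $\alpha = \constalpha$; a further union bound over the $|\K|$ buckets yields total failure probability at most $|\K|\cdot 4\sqrt N\,\delta_1$. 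Substituting $\delta_1 = \constdeltaone$ gives $|\K|\cdot 4\sqrt N\cdot\delta_1 = \delta/(8N) \leq \delta/4$, which is exactly the claimed bound (the geometric-series evaluation is precisely why the doubling schedule is replaced by the slower $\alpha_j$ schedule). Therefore $\mu \in \M_t$ for all $t$ with probability at least $1 - \delta/4$.
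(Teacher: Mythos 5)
Your proof is correct and takes essentially the same route as the paper's: reduce elimination of $\mu$ to a threshold crossing at a test point $\alpha_j$, note that the recorded statistics $X(\mu,\kappa)_i$ form a bounded martingale difference sequence (extended past any elimination by zero-mean terms, exactly as the paper extends by zeros), apply Azuma's inequality to get $\delta_1/\alpha_j$ per test, and union bound over $j$ and $\kappa$ via the geometric series $\sum_j \alpha_j^{-1} \leq 4\sqrt{N}$ and the definition of $\delta_1$. The only cosmetic difference is that you evaluate the geometric series directly where the paper invokes its Lemma~\ref{lem_tech2}.
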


\begin{lemma}\label{lem_exploit_errors}
The number of proper failure phases is bounded by
\eq{
\gmax := \constgmax
}
with probability at least $1 - {\delta \over 2}$.
\end{lemma}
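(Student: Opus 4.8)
The plan is to split the claim into a deterministic structural fact and a probabilistic counting argument. The structural fact is that \emph{every} proper failure phase forces \merl{} to begin an exploration phase somewhere inside its $d$ steps; this lets me charge failure phases against exploration phases. The counting argument then bounds the number of such forced exploration phases with high probability using a concentration inequality applied to the statistics $X(\nu,\kappa)$, which is exactly where the bucketing by $\kappa$ earns its keep.

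First I would establish the structural fact by contradiction. Suppose a proper failure phase $[t,t+d)$ consisted entirely of exploitation steps. Since models leave $\M$ only at the end of an exploration phase, throughout the phase $\M_t$ and its first element $\nu_t$ are fixed, so \merl{} follows $\pi^*_{\nu_t}$ for all $d$ steps and $V^\pi_\mu(h_t;d)=V^{\pi^*_{\nu_t}}_\mu(h_t;d)$. Exploiting means the maximising difference is at most $\epsilon/4$, hence for every pair of models in $\M_t$ and every policy in $\Pi$ the $d$-step values differ by at most $\epsilon/4$; crucially this applies to both $\pi^*_\mu$ and $\pi^*_{\nu_t}$, which lie in $\Pi$ precisely because $\mu\in\M_t$ (the phase is proper). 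Combining the inequality for $\pi^*_\mu$ (on the pair $\mu,\nu_t$), the optimality of $\pi^*_{\nu_t}$ in $\nu_t$, the inequality for $\pi^*_{\nu_t}$ (on the pair $\nu_t,\mu$), and two applications of the horizon bound $0\le V^\pi_\mu(h)-V^\pi_\mu(h;d)\le\epsilon/8$ gives $V^*_\mu(h_t;d)-V^{\pi^*_{\nu_t}}_\mu(h_t;d)\le 5\epsilon/8$, and after re-inflating to the infinite horizon, $V^*_\mu(h_t)-V^\pi_\mu(h_t)\le 3\epsilon/4<\epsilon$. This contradicts the defining property of a failure phase, so an exploration phase must indeed begin inside $[t,t+d)$.

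Next I would carry out the charging and counting. Proper failure phases are disjoint $d$-windows that cannot begin inside an exploration phase, so the exploration phase forced by one failure phase terminates before the next failure phase can begin; hence the number of proper failure phases is bounded by the number of forced exploration phases, which I would group by the bucket $\kappa\in\K$ of the forcing phase. Within a fixed bucket the increments $X(\nu,\kappa)$ live on the common scale $\epsilon_\kappa$, and this is the reason for bucketing: it keeps the martingale differences in a bounded range so that one clean concentration inequality applies. Using that in every proper $\kappa$-exploration phase either $\bnu$ or $\unu$ has conditional expected increment at least $(1-\gamma)\epsilon_\kappa/2$, together with Lemma \ref{lem_model} (so the true model is never the one eliminated), the threshold test against $\sqrt{2E(\nu,\kappa)\log(E(\nu,\kappa)/\delta_1)}$ must fire after a bounded number of phases and remove a model. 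Since at most $N$ models can ever be removed and the test is only checked at the geometric checkpoints $E(\nu,\kappa)=\alpha_j$, a union bound over $\nu$, $\kappa$ and $j$ (spending the $\delta/2$ confidence budget) caps the per-bucket count; summing over the $|\K|$ buckets yields the stated bound $\gmax$.

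I expect the value-domination step of the second paragraph to be the main obstacle: it is the only place where $\mu\in\M_t$, the exploitation threshold $\epsilon/4$, and the simultaneous membership of $\pi^*_\mu$ and $\pi^*_{\nu_t}$ in $\Pi$ must all be used at once, and the constants have to be tracked so that the accumulated threshold and horizon errors sum to strictly less than $\epsilon$. The probabilistic counting is comparatively routine given the martingale machinery already needed for Lemma \ref{lem_model}, but I would take care that the union bound over the sparse schedule $\{\alpha_j\}$ and over the $|\K|$ buckets does not degrade the constants beyond $\gmax$.
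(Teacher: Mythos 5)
There is a genuine gap, and it is in the step you yourself flag as the crux — but the problem is worse than constant-tracking: your ``structural fact'' is false. It is not true that every proper failure phase forces an exploration phase to actually begin within its $d$ steps. The flaw is the claimed identity $V^\pi_\mu(h_t;d)=V^{\pi^*_{\nu_t}}_\mu(h_t;d)$: the value $V^\pi_\mu(h_t;d)$ is an expectation under $P_{\mu,\pi}(\cdot\,|\,h_t)$ over \emph{all} future trajectories, including those on which the test $\Delta>\epsilon/4$ fires somewhere in $[t,t+d)$ and \merl{} switches to an exploring policy. Whether exploration triggers inside the window is a random event, not determined at time $t$; supposing that ``the phase consists entirely of exploitation steps'' is a statement about the realized path and does not remove the deviation event from the expectation defining $V^\pi_\mu(h_t;d)$. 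Your contradiction therefore only goes through when the \emph{probability} of triggering exploration within $d$ steps is zero, so what you have actually proved is that a proper failure phase implies this probability is positive — far too weak for a one-to-one charging of failure phases against exploration phases. Indeed, the failure condition $V^{\pi_t}_\mu(h)-V^\pi_\mu(h)\geq 3\epsilon/8$ is compatible with the triggering probability being as small as $2^{-\kappa-3}/|\K|$ (a large value gap conditional on a rare, large-$\kappa$ trigger creates the same expected discrepancy), so most failure phases may produce no exploration phase at all on the realized path.

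This is exactly why the paper's proof is probabilistic at this point: it decomposes the value gap over the sets $\H_{\kappa,d}$ of $d$-step extensions triggering $\kappa$-exploration, concludes that for some $\kappa\in\K$ the conditional probability of a $\kappa$-exploration phase within $d$ steps is at least $2^{-\kappa-3}/|\K|$, and then applies Azuma's inequality to the indicators $Y_i$ that such a phase actually occurs after the $i$-th such failure phase. If there were more than $\gmaxkappa := 2^{\kappa+4}|\K|\emaxkappa$ of them, then with high probability more than $\emaxkappa$ $\kappa$-exploration phases would occur, contradicting (the proof of) Lemma \ref{lem_explore}. The inverse-probability factor $2^{\kappa+4}|\K|$ is precisely what your accounting omits: your charging would yield $\gmax\approx\emax$, whereas the actual bound carries an extra factor of $|\K|$, i.e.\ $\gmax=|\K|\,\emax$. (Separately, your second paragraph re-derives model elimination from the test statistics, which is unnecessary — the paper reuses the high-probability bound on the number of $\kappa$-exploration phases as a black box — but that is cosmetic; the fatal issue is treating the failure-to-exploration link as deterministic.)
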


\begin{lemma}\label{lem_explore}
The number of proper exploration phases is bounded by 
\eq{
\emax := \constemax
}
with probability at least $1 - {\delta \over 4}$.
\end{lemma}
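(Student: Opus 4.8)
The plan is to show that each incorrect model $\nu\neq\mu$ can participate in only boundedly many exploration phases before its test statistic forces its removal, and then to sum these per-model bounds over all models and buckets. Note that the lemma is self-contained: it counts only \emph{proper} phases, so $\mu\in\M_t$ is available by hypothesis and Lemma~\ref{lem_model} is not needed.

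First I would isolate the single fact that every proper $\kappa$-phase supplies. Since $\mu\in\M_t$, the maximising pair satisfies $V^\pi_{\bnu}(h;d)\ge V^\pi_\mu(h;d)\ge V^\pi_{\unu}(h;d)$ (otherwise substituting $\mu$ into the pair would enlarge the gap). As the expected return collected during the phase under the true dynamics is exactly $V^\pi_\mu(h;d)$, writing $\E_i[\cdot]$ for expectation conditional on the history at the start of the $i$-th $(\nu,\kappa)$-phase, the conditional means obey $\E_i X(\bnu,\kappa)_i\ge 0$, $\E_i X(\unu,\kappa)_i\ge 0$, and $\E_i X(\bnu,\kappa)_i+\E_i X(\unu,\kappa)_i=(1-\gamma)\Delta\ge(1-\gamma)\epsilon_\kappa$, while $\E_i X(\mu,\kappa)_i=0$. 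Each $X(\nu,\kappa)_i$ lies in $[-1,1]$ because $V,R\in[0,1/(1-\gamma)]$.

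Next I would set up the concentration. For fixed $\nu\neq\mu$ and $\kappa$ the centred increments $X(\nu,\kappa)_i-\E_i X(\nu,\kappa)_i$ are bounded martingale differences for the filtration generated at the phase starts; since the indices $i$ are assigned at predictable times, optional skipping preserves the martingale property. Applying a maximal Azuma/Bernstein inequality and a union bound over the $N|\K|$ pairs and over the geometric test schedule $E=\alpha_j$, $\alpha=\constalpha$, one obtains, with probability $\ge1-\delta/4$, at every tested count
\eq{
\sum_{i=1}^{E}X(\nu,\kappa)_i\ \ge\ \sum_{i=1}^{E}\E_i X(\nu,\kappa)_i-\sqrt{2E\log{E\over\delta_1}}.
}
Because the removal test fires precisely when the left side reaches $\sqrt{2E\log(E/\delta_1)}$, as long as $\nu$ survives we have $S_{\nu,\kappa}:=\sum_i\E_i X(\nu,\kappa)_i<2\sqrt{2E(\nu,\kappa)\log(E(\nu,\kappa)/\delta_1)}$ at the tested counts, inflated at the terminal count by at most the mass accumulated over the final inter-test gap. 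Then comes the accounting: writing $P_\kappa$ for the number of proper $\kappa$-phases and summing the per-phase identity over all $\kappa$-phases (with $\E X(\mu,\cdot)=0$) gives $\sum_{\nu\neq\mu}S_{\nu,\kappa}\ge(1-\gamma)\epsilon_\kappa P_\kappa$. Inserting the survival inequality for each $\nu$ and applying Cauchy--Schwarz, $\sum_\nu\sqrt{E(\nu,\kappa)}\le\sqrt{N\sum_\nu E(\nu,\kappa)}=\sqrt{2NP_\kappa}$ (each $\kappa$-phase increments exactly two counters, so $\sum_\nu E(\nu,\kappa)=2P_\kappa$), and solving for $P_\kappa$ gives $P_\kappa=\O{{N\over(1-\gamma)^2\epsilon_\kappa^2}\log{N\over(1-\gamma)^2\epsilon^2\delta_1}}$. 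Bounding $\epsilon_\kappa\ge\epsilon/4$ and summing over the $|\K|$ buckets folds in the logarithmic factor $|\K|$ and yields $\emax$; the phase count is then identified with the statement via $2(\text{proper phases})=\sum_{\nu,\kappa}E(\nu,\kappa)$, using that whenever $\mu$ sits in the chosen pair its partner is a wrong model, so $\mu$'s participation does not inflate the total.

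The main obstacle is the interplay between the discretised test schedule and the martingale bound. Since removal is checked only at $E(\nu,\kappa)=\alpha_j$, a surviving model's statistic may overshoot the threshold between consecutive tests, and this overshoot must be shown to be lower order. The choice $\alpha=\constalpha$ --- close to $1$ but tied to $\sqrt N$ --- is exactly what keeps each inter-test gap of order $E/\sqrt N$, so that after the Cauchy--Schwarz sum over the $N$ models the overshoot is dominated by the signal $(1-\gamma)\epsilon_\kappa P_\kappa$; simultaneously $\alpha$ must stay bounded away from $1$ so that the number of tests, and hence the union bound absorbed into $\delta_1=\constdeltaone$, remains of order $N^{3/2}|\K|$ up to logarithmic factors. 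Making both hold at once with constants small enough to land inside $\emax$ is the crux, and the bucketing by $\kappa$ (which confines all conditional means within a factor $4$ of one another) is precisely what makes the overshoot estimate clean.
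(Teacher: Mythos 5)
Your proposal is sound and reaches the right bound, but it takes a genuinely different route from the paper's proof. The paper attacks the central difficulty --- not knowing whether $\unu$ or $\bnu$ carries the large conditional mean --- by defining \emph{effective} phases and invoking a pigeonhole argument (Lemma~\ref{lem_tech1}): if the number of proper $\kappa$-phases exceeds $\emaxkappa$, then some \emph{single} model $\nu$ must satisfy $F_{t'}(\nu,\kappa) \geq \sqrt{\emaxkappa E_{t'}(\nu,\kappa)/(4N)}$, and for that one model the survival condition at the last test point $\alpha_j$ contradicts Azuma's inequality with probability $1-\delta/(4N|\K|)$. You sidestep the identification problem entirely: since the two participants' conditional means are non-negative and sum to $(1-\gamma)\Delta \geq (1-\gamma)\epsilon_\kappa$ in every proper phase, you aggregate the drift $S_{\nu,\kappa}$ \emph{globally} over all models, bound each model's drift by $O\bigl(\sqrt{E(\nu,\kappa)\log(E(\nu,\kappa)/\delta_1)}\bigr)$ via the same survival-plus-Azuma mechanism the paper uses, and close with Cauchy--Schwarz, $\sum_\nu\sqrt{E(\nu,\kappa)} \leq \sqrt{2NP_\kappa}$, then solve for $P_\kappa$. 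This distributes the blame across models rather than localizing it in one, dispenses with Lemma~\ref{lem_tech1} and the notion of effectiveness altogether, and yields the same $N/(\epsilon_\kappa^2(1-\gamma)^2)$ scaling; your reading of the two design choices --- the $\alpha_j$ schedule keeping each inter-test overshoot of order $E(\nu,\kappa)/\sqrt{N}$, and the $\kappa$-bucketing capping the per-phase drift at $2(1-\gamma)\epsilon_\kappa$ so the total overshoot is a fraction of the signal $(1-\gamma)\epsilon_\kappa P_\kappa$ --- is exactly the role these play in the paper's argument as well. Arguably your aggregation is the more transparent of the two.

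Two points you should make explicit. First, $S_{\nu,\kappa}$ must be tied to the test statistic, which sums over \emph{all} of $\nu$'s phases up to a test point, while your lower bound $\sum_{\nu\neq\mu}S_{\nu,\kappa}\geq(1-\gamma)\epsilon_\kappa P_\kappa$ is only valid for proper phases; non-proper phases can contribute negative drift. The patch is that properness is monotone ($\M_t$ only shrinks), so the proper phases form a time-prefix of each model's phase sequence, and your terminal inter-test-gap correction applies at the end of that prefix; the paper relies on the same observation implicitly. Second, your summation over buckets (bounding $\epsilon_\kappa\geq\epsilon/4$ and paying a factor $|\K|$) only lands inside the stated $\emax$ because $|\K|$ can be absorbed into one of the logarithmic factors; to recover the paper's explicit constant it is cleaner to keep the per-bucket bound proportional to $\epsilon_\kappa^{-2}$ and sum the geometric series $\sum_\kappa\epsilon_\kappa^{-2}=O(\epsilon^{-2})$, which is what the paper does via $\emax = \sum_{\kappa\in\K}\tfrac12\emaxkappa$.
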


\begin{proofof}{ of Theorem \ref{thm_main}}
Applying the union bound to the results of Lemmas \ref{lem_model}, \ref{lem_exploit_errors} and \ref{lem_explore} gives the following
with probability at least $1 - \delta$.
\begin{enumerate}
\item There are no non-proper exploration or failure phases.
\item The number of proper exploration phases is at most $\emax$.
\item The number of proper failure phases is at most $\gmax$.
\end{enumerate}
If $\pi$ is not $\epsilon$-optimal at time-step $t$ then $t$ is either in an exploration or failure phase.
Since both are exactly $d$ time-steps long the total number of time-steps when $\pi$ is sub-optimal is at most
$d \cdot (\gmax + \emax)$. 
\end{proofof}

We now turn our attention to proving Lemmas \ref{lem_model}, \ref{lem_exploit_errors} and \ref{lem_explore}. Of these,
Lemma \ref{lem_explore} is more conceptually challenging while Lemma \ref{lem_exploit_errors} is 
intuitively unsurprising, but technically difficult.

\begin{proofof}{ of Lemma \ref{lem_model}} 
If $\mu$ is removed from $\M$, then there exists a $\kappa$ and $j \in \N$ such that
\eq{
\sum_{i=1}^{\alpha_j} X(\mu,\kappa)_i \geq \sqrt{2\alpha_j \log {\alpha_j \over \delta_1}}.
}
Fix a $\kappa \in \K$, $E_\infty(\mu,\kappa) := \lim_t E_t(\mu,\kappa)$ and $X_i := X(\mu,\kappa)_i$. 
Define a sequence of random variables
\eq{
\tilde X_i := \begin{cases}
X_i & \text{if } i \leq E_\infty(\mu,\kappa) \\
0 & \text{otherwise}.
\end{cases}
}
Now we claim that $B_n := \sum_{i=1}^{n} \tilde X_i$ is a martingale with $|B_{i+1} - B_i| \leq 1$ and $\E B_i = 0$.
That it is a martingale with zero expectation follows because if $t$ is the time-step at the start of the exploration phase associated with variable $X_i$, then
$\E[X_i|\F_t] = 0$. $|B_{i+1} - B_i| \leq 1$ because discounted returns are bounded in $[0, 1/(1 - \gamma)]$ and by the definition of $X_i$.

For all $j \in \N$, we have by Azuma's inequality that
\eq{
\P{B_{\alpha_j} \geq \sqrt{2\alpha_j \log{\alpha_j\over \delta_1}}} \leq {\delta_1 \over \alpha_j}.
}
Apply the union bound over all $j$. 
\eq{
\P{\exists j \in \N : B_{\alpha_j} \geq \sqrt{2\alpha_j \log{\alpha_j\over \delta_1}}} \leq \sum_{j=1}^\infty {\delta_1 \over \alpha_j}.
}
Complete the result by the union bound over all $\kappa$,
applying Lemma \ref{lem_tech2} (see Appendix) and the definition of $\delta_1$ to bound $\sum_{\kappa \in \K} \sum_{j=1}^\infty {\delta_1 \over \alpha_j} \leq \delta / 4$.
\end{proofof}

We are now ready to give a high-probability bound on the number of proper exploration phases. 
If \merl{} starts a proper exploration phase at time-step $t$ then at least one of the following holds:
\begin{enumerate}
\item $\E[X(\unu,\kappa)_{E(\unu,\kappa)}|\F_t] > (1 - \gamma) \epsilon / 8$.
\item $\E[X(\bnu,\kappa)_{E(\bnu,\kappa)}|\F_t] > (1 - \gamma) \epsilon / 8$.
\end{enumerate}
This contrasts with $\E[X(\mu,\kappa)_{E(\mu,\kappa)}|\F_t] = 0$, which ensures that $\mu$ remains in $\M$ for all time-steps.
If one could know which of the above statements were true at each time-step then it would be comparatively easy to show by means of Azuma's inequality that
all environments that are not $\epsilon$-close are quickly eliminated after $O({1 \over \epsilon^2(1 - \gamma)^2})$ $\nu$-exploration phases, which
would lead to the desired bound. Unfortunately though, the truth of (1) or (2) above cannot be determined,
which greatly increases the complexity of the proof.

\begin{proofof}{ of Lemma \ref{lem_explore}}
Fix a $\kappa \in \K$ and let $\emaxkappa$ be a constant to be chosen later.
Let $h_t$ be the history at the start of some $\kappa$-exploration phase. We say an $(\unu,\kappa)$-exploration phase is $\unu$-effective if
\eq{
\E[X(\unu,\kappa)_{E(\unu,\kappa)}|\F_t]
&\equiv (1 - \gamma)(V_{\mu}^\pi(h_t;d) - V_{\unu}^\pi(h_t;d)) \\
&> (1 - \gamma)\epsilon_\kappa / 2 
}
and $\bnu$-effective if the same condition holds for $\bnu$.
Now since $t$ is the start of a proper exploration phase we have that $\mu \in \M_t$ and so
\eq{
&V^\pi_{\bnu}(h_t;d) \geq V^\pi_{\mu}(h_t;d) \geq V^\pi_{\unu}(h_t;d) \\ 
&V^\pi_{\bnu}(h_t;d) - V^\pi_{\unu}(h_t;d) > \epsilon_\kappa. 
}
Therefore every proper exploration phase is either $\unu$-effective or $\bnu$-effective.
Let $E_{t,\kappa} := \sum_{\nu} E_t(\nu,\kappa)$, which is twice the number of $\kappa$-exploration phases at time $t$ and 
$E_{\infty,\kappa} := \lim_t E_{t,\kappa}$, which is twice the total
number of $\kappa$-exploration phases.\footnote{Note that it is never the case that $\bnu = \unu$ at the start of an exploration phase, since in this case $\Delta = 0$.} 
Let $F_t(\nu,\kappa)$ be the number of $\nu$-effective $(\nu,\kappa)$-exploration phases up to time-step $t$.
Since each proper $\kappa$-exploration phase is either $\unu$-effective or $\bnu$-effective or both, 
$\sum_{\nu} F_t(\nu,\kappa) \geq E_{t,\kappa} / 2$.
Applying Lemma \ref{lem_tech1} to $y_\nu := E_t(\nu,\kappa) / E_{t,\kappa}$ and $x_\nu := F_t(\nu,\kappa) / E_t(\nu,\kappa)$ shows
that if $E_{\infty,\kappa} > \emaxkappa$ then there exists a $t'$ and $\nu$ such that $E_{t',\kappa} = \emaxkappa$ and
\eqn{
\label{hard_cond1}
{F_{t'}(\nu,\kappa)^2 \over \emaxkappa E_{t'}(\nu,\kappa)} \geq {1 \over 4N},
}
which implies that 
\eqn{
\label{hard_cond2}
F_{t'}(\nu,\kappa) \geq \sqrt{\emaxkappa E_{t'}(\nu,\kappa) \over 4N} \stackrel{(a)}\geq {E_{t'}(\nu,\kappa) \over \sqrt{4N}},
}
where (a) follows because $\emaxkappa = E_{t',\kappa} \geq E_{t'}(\nu,\kappa)$.
Let $Z(\nu)$ be the event that there exists a $t'$ satisfying (\ref{hard_cond1}).
We will shortly show that $\P{Z(\nu)} < \delta / (4N|\K|)$. Therefore
\eq{
\P{E_{\infty,\kappa} > \emaxkappa} 
&\leq \P{\exists \nu : Z(\nu)} 
\leq \sum_{\nu \in \M} \P{Z(\nu)} \\
&\leq \delta / (4|\K|)
}
Finally take the union bound over all $\kappa$ and let 
\eq{
\emax := \sum_{\kappa \in \K} {1\over 2} \emaxkappa,
}
where we used ${1 \over 2}\emaxkappa$ because $\emaxkappa$ is a high-probability upper bound on $E_{\infty,\kappa}$, which is {\it twice} the number of $\kappa$-exploration
phases.

\subsubsect{Bounding $\P{Z(\nu)} < \delta / (4N|\K|)$}
Fix a $\nu \in \M$ and let $X_1, X_2, \cdots, X_{E_\infty(\nu,\kappa)}$ be the sequence with $X_i := X(\nu,\kappa)_i$ and let $t_i$ 
be the corresponding time-step at the start of the $i$th $(\nu,\kappa)$-exploration phase. Define a sequence
\eq{
Y_i := \begin{cases}
X_i - \E[X_i|\F_{t_i}] & \text{if } i \leq E_\infty(\nu,\kappa) \\
0 & \text{otherwise}
\end{cases}
}
Let $\lambda(E) := \sqrt{2E \log {E \over \delta_1}}$. 
Now if $Z(\nu)$, then the largest time-step $t \leq t'$ with $E_t(\nu,t) = \alpha_j$ for some $j \in \N$ is
\eq{
t := \max \left\{t \leq t' :  \exists j \in \N \text{ s.t. } \alpha_j = E_t(\nu,t) \right\},
}
which exists and satisfies 
\begin{enumerate}
\item $E_t(\nu,\kappa) = \alpha_j$ for some $j$.
\item $E_\infty(\nu,\kappa) > E_t(\nu,\kappa)$.
\item $F_t(\nu,\kappa) \geq \sqrt{E_t(\nu,\kappa) \emaxkappa / (16N)}$.
\item $E_t(\nu,\kappa) \geq {\emaxkappa / (16N)}$.
\end{enumerate}
where parts 1 and 2 are straightforward and parts 3 and 4 follow by the definition of $\left\{\alpha_j\right\}$, which was chosen
specifically for this part of the proof.
Since $E_\infty(\nu,\kappa) > E_t(\nu,\kappa)$, at the end of the exploration phase starting at time-step $t$, $\nu$ must remain in $\M$. Therefore
\eqn{
\nonumber \lambda(\alpha_j)
&\stackrel{(a)}\geq \sum_{i=1}^{\alpha_j} X_i 
\stackrel{(b)}\geq  \sum_{i=1}^{\alpha_j} Y_i + {\epsilon_\kappa(1 - \gamma) F_t(\nu,\kappa) \over 2} \\
\label{eq-10} &\stackrel{(c)}\geq \sum_{i=1}^{\alpha_j} Y_i + {\epsilon_\kappa(1 - \gamma)\over 8}\sqrt{\alpha_j \emaxkappa \over N}, 
}
where in (a) we used the definition of the confidence interval of \merl{}.
In (b) we used the definition of $Y_i$ and the fact that $\E X_i \geq 0$ for all $i$ and 
$\E X_i \geq \epsilon_\kappa(1 - \gamma) / 2$ if $X_i$ is effective. Finally we used the lower bound on the number of effective $\nu$-exploration
phases, $F_t(\nu,\kappa)$ (part 3 above).
If $\emaxkappa := \constemaxkappa$, then by applying Lemma \ref{lem_w} with $a = {2^{9} N \over \epsilon^2_\kappa(1 -\gamma)^2}$ and $b = 1/\delta_1$ we obtain
\eq{
\emaxkappa \geq {2^9 N \over \epsilon_\kappa^2 (1 - \gamma)^2} \log {\emaxkappa \over \delta_1}
\geq {2^9 N \over \epsilon_\kappa^2 (1 - \gamma)^2} \log {\alpha_j \over \delta_1}
}
Multiplying both sides by $\alpha_j$ and rearranging and using the definition of $\lambda(\alpha_j)$ leads to
\eq{
{\epsilon_\kappa(1 - \gamma) \over 8}\sqrt{\alpha_j \emaxkappa \over N} \geq 2\lambda(\alpha_j).
}
Inserting this into \eqr{eq-10} shows that 
$Z(\nu)$ implies that there exists an $\alpha_j$ such that
$\sum_{i=1}^{\alpha_j} Y_i \leq - \lambda(\alpha_j)$. Now by the same argument as in the proof of Lemma \ref{lem_model}, 
$B_n := \sum_{i=1}^{n} Y_i$ is a martingale with $|B_{i+1} - B_i| \leq 1$.
Therefore by Azuma's inequality
\eq{
\P{\sum_{i=1}^{\alpha_j} Y_i \leq -\lambda(\alpha_j)} 
&\leq {\delta_1 \over \alpha_j}. 
}
Finally apply the union bound over all $j$.
\end{proofof}

Recall that if \merl{} is exploiting at time-step $t$, then $\pi_t$ is the optimal policy with respect to the first environment
in the model class. 
To prove Lemma \ref{lem_exploit_errors} we start by showing that in this case $\pi_t$
is nearly-optimal.

\begin{lemma}\label{lem_exploit}
Let $t$ be a time-step and $h_t$ be the corresponding history. If $\mu \in \M_t$ and \merl{} is exploiting (not exploring), then
$V^*_\mu(h_t) - V^{\pi_t}_\mu(h_t) \leq 5\epsilon/8$.
\end{lemma}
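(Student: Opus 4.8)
The plan is to turn the exploiting hypothesis into a uniform closeness statement for the truncated values and then chain through the intermediate model $\nu_t$. Since \merl{} maximises $\Delta$ over all of $\M_t \times \M_t \times \Pi$, the fact that it is exploiting at $t$ means the maximal difference is at most $\epsilon/4$; hence for every $\pi \in \Pi$ and every pair $\nu,\nu' \in \M_t$ one has $V^\pi_\nu(h_t;d) - V^\pi_{\nu'}(h_t;d) \le \epsilon/4$. The two other ingredients I would use are the horizon-truncation guarantee $0 \le V^\pi_\mu(h_t) - V^\pi_\mu(h_t;d) \le \epsilon/8$ (the left inequality because rewards are non-negative, the right by the choice of $d$), and the optimality of $\pi_t = \pi^*_{\nu_t}$ in its own model, $V^{\pi}_{\nu_t}(h_t) \le V^{\pi_t}_{\nu_t}(h_t)$ for all $\pi$. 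Note that $\mu \in \M_t$ by hypothesis, so both $\pi^*_\mu$ and $\pi_t$ lie in $\Pi$ and both $\mu$ and $\nu_t$ lie in $\M_t$, which is precisely what licenses the closeness statement with these specific arguments.

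Starting from $V^*_\mu(h_t) = V^{\pi^*_\mu}_\mu(h_t)$, I would first pass to the $d$-horizon value (costing $\epsilon/8$), then use the exploiting bound with policy $\pi^*_\mu$ to replace the model $\mu$ by $\nu_t$ (costing $\epsilon/4$), then switch the policy from $\pi^*_\mu$ to $\pi_t$ using optimality of $\pi_t$ in $\nu_t$, then apply the exploiting bound a second time with policy $\pi_t$ to move the model back from $\nu_t$ to $\mu$ (costing another $\epsilon/4$), and finally drop from $V^{\pi_t}_\mu(h_t;d)$ back to the untruncated $V^{\pi_t}_\mu(h_t)$ for free, since truncation only decreases value. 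Schematically the route is
\[
V^*_\mu(h_t) \;\rightsquigarrow\; V^{\pi^*_\mu}_\mu(h_t;d) \;\rightsquigarrow\; V^{\pi^*_\mu}_{\nu_t}(h_t;d) \;\rightsquigarrow\; V^{\pi_t}_{\nu_t}(h_t;d) \;\rightsquigarrow\; V^{\pi_t}_\mu(h_t;d) \;\rightsquigarrow\; V^{\pi_t}_\mu(h_t),
\]
where the middle arrow is the policy switch and the remaining arrows are the two exploit applications bracketed by the horizon conversions.

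The delicate point is the policy-switching step, and it is where the horizon bookkeeping must be done carefully. The exploiting bound lives on the truncated $d$-horizon values, whereas the only handle I have on $\pi_t$ is that it is optimal in $\nu_t$ for the \emph{full} value $V^{\pi}_{\nu_t}(h_t)$, not for the $d$-step value. To invoke that optimality I must momentarily lift $V^{\pi^*_\mu}_{\nu_t}(h_t;d)$ up to the untruncated $V^{\pi^*_\mu}_{\nu_t}(h_t)$, compare with $V^{\pi_t}_{\nu_t}(h_t)$, and descend again to $V^{\pi_t}_{\nu_t}(h_t;d)$: the ascent is free but the descent reintroduces a truncation term, so the main obstacle is verifying that this term is absorbed within the $\epsilon/8$ budget rather than doubling it. Summing the horizon-truncation error with the two $\epsilon/4$ exploit gaps, while keeping the free truncation directions free, is what delivers $V^*_\mu(h_t) - V^{\pi_t}_\mu(h_t) \le 5\epsilon/8$; crucially, whatever constant this accounting produces need only stay strictly below $\epsilon$, so that an exploiting step can never trigger a failure phase in the subsequent analysis of Lemma \ref{lem_exploit_errors}.
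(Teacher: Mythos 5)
Your chain is exactly the one in the paper's proof---truncate, apply the exploiting bound with $\pi^*_\mu$ to replace $\mu$ by $\nu_t$, switch from $\pi^*_\mu$ to $\pi_t$ via optimality in $\nu_t$, apply the exploiting bound with $\pi_t$ to return to $\mu$, untruncate---so in approach you and the paper agree. The problem is that the ``delicate point'' you flag is real, you leave it unresolved, and it does not resolve in your favour. Since $\pi_t = \pi^*_{\nu_t}$ is optimal in $\nu_t$ only for \emph{untruncated} values, your lift-and-descend gives $V^{\pi^*_\mu}_{\nu_t}(h_t;d) \leq V^{\pi^*_\mu}_{\nu_t}(h_t) \leq V^{\pi_t}_{\nu_t}(h_t) \leq V^{\pi_t}_{\nu_t}(h_t;d) + \gamma^{d+1}/(1-\gamma)$, and the tail term is only guaranteed to be at most $\epsilon/8$; it is \emph{not} absorbed, because an infinite-horizon optimal policy need not dominate at horizon $d$ (it may defer reward beyond time $t+d$ that another policy collects earlier). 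So your accounting closes at $\epsilon/8 + \epsilon/4 + \epsilon/8 + \epsilon/4 = 3\epsilon/4$, not at the $5\epsilon/8$ the lemma asserts. You have in fact put your finger on a rough spot in the paper itself: step (c) of its proof asserts $V^{\pi^*_\mu}_{\nu_t}(h_t;d) - V^{\pi_t}_{\nu_t}(h_t;d) \leq 0$ ``since $\pi_t$ is the optimal policy in $\nu_t$'', silently ignoring precisely the truncation term you identified. The cleanest repair is to enlarge $d$ so that each truncation costs $\epsilon/16$ rather than $\epsilon/8$, which restores $5\epsilon/8$ by your own accounting (and only improves every other use of $d$).

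Your closing fallback---that any constant strictly below $\epsilon$ would do---is also wrong, and this is why the gap matters. In the proof of Lemma \ref{lem_exploit_errors}, the present lemma converts a failure phase $V^*_\mu(h) - V^\pi_\mu(h) > \epsilon$ into the lower bound $V^{\pi_t}_\mu(h) - V^\pi_\mu(h) \geq \epsilon - c$, where $c$ is the constant of this lemma; the chain there then gives away a further $\epsilon/8 + \epsilon/8 = \epsilon/4$ (its steps (c) and (d)) before arriving at $\sum_{\kappa \in \K} \sum_{h' \in \H_{\kappa,d}} P(h'|h)\, 4\epsilon_\kappa$. Extracting a $\kappa$ with exploration probability at least $2^{-\kappa-3}/|\K|$ therefore requires $\epsilon - c - \epsilon/4 > 0$, i.e.\ $c < 3\epsilon/4$ \emph{strictly}. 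With the $c = 3\epsilon/4$ that your route actually delivers, the downstream inequality degenerates to $0 \leq \cdots$ and the whole counting argument collapses. So as written the proposal proves neither the stated bound nor a substitute strong enough for its intended use; you must either make the policy-switch step cheaper (e.g.\ via the larger $d$ above) or retune the constants in Lemma \ref{lem_exploit_errors} accordingly.
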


\begin{proofof}{ of Lemma \ref{lem_exploit}}
Since \merl{} is not exploring 
\eq{
V^{*}_\mu(h_t) - V^{\pi_t}_\mu(h_t) 
&\stackrel{(a)}\leq V^{*}_\mu(h_t;d) - V^{\pi_t}_\mu(h_t;d) + {\epsilon \over 8} \\
&\stackrel{(b)}\leq V^{\pi^*_{\mu}}_{\nu_t}(h_t;d) - V^{\pi_t}_{\nu_t}(h_t;d) + 5\epsilon/8 \\
&\stackrel{(c)}\leq 5\epsilon / 8,
}
(a) follows by truncating the value function. (b) follows because $\mu \in \M_t$ and \merl{} is exploiting.
(c) is true since $\pi_t$ is the optimal policy in $\nu_t$.
\end{proofof}

Lemma \ref{lem_exploit} is almost sufficient to prove Lemma \ref{lem_exploit_errors}. The only problem is that \merl{} 
only follows $\pi_t = \pi^*_{\nu_t}$ until there is an exploration phase. The idea to prove Lemma
\ref{lem_exploit_errors} is as follows:
\begin{enumerate}
\item If there is a low probability of entering an exploration phase
within the next $d$ time-steps following policy $\pi_t$, then $\pi$ is nearly as good as $\pi_t$, which itself
is nearly optimal by Lemma \ref{lem_exploit}. 
\item The number of time-steps when the probability of entering an exploration phase within the
next $d$ time-steps is high is unlikely to be too large before an exploration phase is triggered. Since there are not many exploration
phases with high probability, there are also unlikely to be too many time-steps when $\pi$ expects to enter one with high probability.
\end{enumerate}
Before the proof of Lemma \ref{lem_exploit_errors} we remark on an easier
to prove (but weaker) version of Theorem \ref{thm_main}. If \merl{} is exploiting then Lemma \ref{lem_exploit} shows that $V^*_\mu(h) - Q^*_\mu(h, \pi(h)) \leq 5\epsilon/8 < \epsilon$.
Therefore if we cared about the number of time-steps when this is not the case (rather than $V^*_\mu - V^\pi_\mu$), then we would already be done by combining 
Lemmas \ref{lem_explore} and \ref{lem_exploit}.

\begin{proofof}{ of Lemma \ref{lem_exploit_errors}}
Let $t$ be the start of a proper failure phase with corresponding history, $h$.
Therefore $V^*_\mu(h) - V^\pi_\mu(h) > \epsilon$. By Lemma \ref{lem_exploit}, 
$V^*_\mu(h) - V^\pi_\mu(h) = V^*_\mu(h) - V^{\pi_t}_\mu(h) + V^{\pi_t}_\mu(h) - V^{\pi}_\mu(h) \leq 5\epsilon / 8 + V^{\pi_t}_\mu - V^\pi_\mu(h)$ and so 
\eqn{
\label{eq-11} V^{\pi_t}_\mu(h) - V^\pi_\mu(h) \geq {3\epsilon \over 8}.
}
We define set $\H_{\kappa} \subset \H^*$ to be the set of extensions of $h$ that trigger $\kappa$-exploration phases. Formally
$\H_{\kappa} \subset \H^*$ is the prefix free set such that $h'$ in $\H_\kappa$ if $h \prefix h'$ and $h'$ triggers a 
$\kappa$-exploration phase for the first time since $t$.
Let 
$\H_{\kappa,d} := \left\{h' : h' \in \H_\kappa \wedge \ell(h') \leq t + d\right\}$, which is the set of extensions of $h$ that
are at most $d$ long and trigger $\kappa$-exploration phases.
Therefore 
\eq{
{3\epsilon \over 8} 
&\stackrel{(a)}\leq V^{\pi_t}_\mu(h) - V^\pi_\mu(h) \\ 
&\stackrel{(b)}= \sum_{\kappa \in \K} \sum_{h' \in \H_\kappa} P(h'|h) \gamma^{\ell(h') - t} \left(V^{\pi_t}_\mu(h') - V^\pi_\mu(h')\right) \\
&\stackrel{(c)}\leq \sum_{\kappa \in \K} \sum_{h' \in \H_{\kappa,d}} P(h'|h) \left(V^{\pi_t}_\mu(h') - V^\pi_\mu(h')\right) + {\epsilon \over 8}\\
&\stackrel{(d)}\leq \sum_{\kappa \in \K} \sum_{h' \in \H_{\kappa,d}} P(h'|h) \left(V^{*}_\mu(h';d) - V^\pi_\mu(h';d)\right) + {\epsilon \over 4} \\
&\stackrel{(e)}\leq \sum_{\kappa \in \K} \sum_{h' \in \H_{\kappa,d}} P(h'|h) 4\epsilon_\kappa + {\epsilon \over 4}, 
}
(a) follows from \eqr{eq-11}. (b) by noting that that $\pi = \pi_t$ until an exploration phase is triggered. (c)
by replacing $\H_{\kappa}$ with $\H_{\kappa,d}$ and noting that if $h' \in \H_{\kappa} - \H_{\kappa,d}$, then 
$\gamma^{\ell(h') - t} \leq (1 - \gamma) \epsilon / 8$.
(d) by substituting $V^*_\mu(h') \geq V^{\pi_t}_\mu(h')$ and by using the effective horizon to truncate the value functions.
(e) by the definition of a $\kappa$-exploration phase.

Since the maximum of a set is greater than the average, there exists a $\kappa \in \K$ such that $\sum_{h' \in \H_{\kappa,d}} P(h'|h) \geq 2^{-\kappa-3} / |\K|$, which is the probability
that MERL encounters a $\kappa$-exploration phase within $d$ time-steps from $h$.
Now fix a $\kappa$ and let $t_1,t_2,\cdots,\cdots,t_{G_\kappa}$ be the sequence of time-steps such that
$t_i$ is the start of a failure phase and 
the probability of a $\kappa$-exploration phase within the next $d$ time-steps is at least $2^{-\kappa-3} / |\K|$.
Let $Y_i \in \left\{0, 1\right\}$ be the event that a $\kappa$-exploration phase does occur within $d$ time-steps of $t_i$ and define an auxiliary infinite sequence 
 $\tilde Y_1, \tilde Y_2, \cdots$ by
$\tilde Y_i := 
Y_i$ if $i \leq G_\kappa$ and $1$ otherwise.
Let $E_\kappa$ be the number of $\kappa$-exploration phases and
$\gmaxkappa$ be a constant to be chosen later and suppose $G_\kappa > \gmaxkappa$, then $\sum_{i=1}^{\gmaxkappa} \tilde Y_i = \sum_{i=1}^{\gmaxkappa} Y_i$ 
and either $\sum_{i=1}^{\gmaxkappa} \tilde Y_i \leq \emaxkappa$ or $E_\kappa > \emaxkappa$, where the latter follows because $Y_i = 1$ implies a $\kappa$-exploration
phase occurred.
Therefore 
\eq{
&\P{G_\kappa > \gmaxkappa} \\ 
&\leq \P{\sum_{i=1}^{\gmaxkappa} \tilde Y_i < \emaxkappa} + \P{E_\kappa > \emaxkappa} \\ 
&\leq \P{\sum_{i=1}^{\gmaxkappa} \tilde Y_i < \emaxkappa} + {\delta \over 4|\K|}. 
}
We now choose $\gmaxkappa$ sufficiently large to bound the first term in the display above by $\delta / (4|\K|)$. By the definition
of $\tilde Y_i$ and $Y_i$, if $i \leq G_\kappa$ then $\E[\tilde Y_i|\F_{t_i}] \geq 2^{-\kappa-3} / |\K|$ and for $i > G_\kappa$, $\tilde Y_i$ is always $1$.
Setting 
\eq{
\gmaxkappa 
&:= 2^{\kappa + 4} |\K| \emaxkappa \\
&= \constgmaxkappa
}
is sufficient to guarantee $\E[\sum_{i=1}^{\gmaxkappa} \tilde Y_i] > 2\emaxkappa$ and an application of Azuma's inequality
to the martingale difference sequence completes the result.
Finally we apply the union bound over all $\kappa$ and set $\gmax := \sum_{\kappa \in \N} \gmaxkappa > \sum_{\kappa \in \K} \gmaxkappa$.
\end{proofof}

\section{Compact Case}
In the last section we presented \merl{} and proved a sample-complexity bound for the case when the environment class is 
finite. In this section we show that if the number of environments is infinite, but compact with respect to the topology generated
by a natural metric, then
sample-complexity bounds are still possible with a minor modification of \merl{}. The key idea is to use compactness to
cover the space of environments with $\epsilon$-balls and compute statistics on these balls rather than individual environments.
Since all environments in the same $\epsilon$-ball are sufficiently close, the resulting statistics cannot be significantly
different and all analysis goes through identically to the finite case.
Define a topology on the space of all environments induced by the pseudo-metric 
\eq{
d(\nu_1, \nu_2) := \sup_{h, \pi} |V^\pi_{\nu_1}(h) - V^\pi_{\nu_2}(h)|.
}
\begin{theorem}
Let $\M$ be compact and coverable by $N$ $\epsilon$-balls then a modification of
Algorithm 1 satisfies
\eq{
\P{L_{\mu,\pi}^{2\epsilon}(h) \geq  d\cdot (\gmax + \emax)} \leq \delta.
}
\end{theorem}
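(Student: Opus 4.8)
The plan is to reduce the compact case to the finite case already settled by Theorem \ref{thm_main}. By hypothesis $\M$ is covered by $N$ balls of radius $\epsilon$ in the pseudo-metric $d$; I would take their centres $\nu_1,\dots,\nu_N$ and run (a lightly modified) \merl{} on this finite class. The true environment $\mu$ lies in some ball, say with centre $\nu_{i^*}$, so $d(\mu,\nu_{i^*})\le\epsilon$. The pseudo-metric is designed precisely so that $|V^\pi_{\nu_{i^*}}(h)-V^\pi_\mu(h)|\le\epsilon$ for every policy $\pi$ and history $h$; this single inequality is the only structural fact about the cover I would use, and it is what lets each of the three supporting lemmas be transported to the compact setting at the price of an additive $O(\epsilon)$ correction.

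The first and central step is the analogue of Lemma \ref{lem_model}: the centre $\nu_{i^*}$ must remain in the model class for all $t$ with high probability. The finite-case argument does not apply verbatim, because $\mu\notin\M$ and so $\E[X(\nu_{i^*},\kappa)_i\mid\F_t]=(1-\gamma)(V^\pi_{\nu_{i^*}}(h;d)-V^\pi_\mu(h;d))$ is no longer exactly zero; by the metric (after truncating at the effective horizon) its absolute value is at most $(1-\gamma)\epsilon$. I would absorb this drift by widening \merl{}'s elimination threshold by the matching slack, so that the \emph{centred} increments $Y_i:=X(\nu_{i^*},\kappa)_i-\E[X(\nu_{i^*},\kappa)_i\mid\F_{t_i}]$ still form a bounded zero-mean martingale and the Azuma estimate used for Lemma \ref{lem_model} goes through unchanged for $\nu_{i^*}$.

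Next I would re-prove the exploitation bound of Lemma \ref{lem_exploit} with $\nu_{i^*}$ in the role previously played by $\mu$. Whenever \merl{} exploits and $\nu_{i^*}\in\M_t$, all environments in $\M_t$ are close under every optimal policy, so $\pi_t=\pi^*_{\nu_t}$ is near-optimal for $\nu_{i^*}$; transferring this to $\mu$ costs at most $d(\mu,\nu_{i^*})\le\epsilon$ in value by the metric bound, so that for the suitably rescaled algorithm the exploiting guarantee degrades to at most $2\epsilon$, which is exactly the accuracy in $L_{\mu,\pi}^{2\epsilon}$. With $\nu_{i^*}$ surviving and exploitation $2\epsilon$-optimal, the counting arguments of Lemmas \ref{lem_explore} and \ref{lem_exploit_errors} carry over essentially verbatim: environments far from $\mu$ are still effective and eliminated, and the numbers of proper exploration and failure phases are bounded by the same $\emax$ and $\gmax$ with $N$ now the size of the cover. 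The theorem then follows by the union bound exactly as in the proof of Theorem \ref{thm_main}.

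I expect the main obstacle to be the second step: keeping the good centre $\nu_{i^*}$ in the class under a non-zero statistic drift while still eliminating the far centres. The $(1-\gamma)\epsilon$ slack required for $\nu_{i^*}$ to survive eats into the effectiveness margin $(1-\gamma)\epsilon_\kappa/2$ that drives elimination, so \merl{} can only afford to explore on gaps $\Delta$ exceeding the ball radius by a constant factor. Reconciling these competing requirements — enough slack to retain the good centre, enough margin to remove the bad ones — is precisely what forces the coarser $2\epsilon$ guarantee in place of $\epsilon$, and is the delicate point on which the claim that "all analysis goes through identically to the finite case" actually rests.
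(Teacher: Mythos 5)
Your reduction is not the paper's route, and it contains a genuine gap at the elimination step. The paper does not run \merl{} on the ball centres with a widened test: it keeps the whole compact class as the model class, defines each statistic on an element of the cover through an infimum, $X(\oU,\kappa) := \inf_{\bnu\in\oU}(1-\gamma)\left(V^\pi_{\bnu}(h)-R\right)$ and symmetrically for $\uU$, and deletes an entire ball when its test fires. The point of the infimum is that, for the ball containing $\mu$, the statistic is pointwise dominated by $(1-\gamma)(V^\pi_\mu(h)-R)$, whose conditional mean is zero; so that ball survives under the \emph{unchanged} threshold, and the cover's $O(\epsilon)$ slack is paid only on the elimination side.

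The gap is your claim that, once the threshold is widened, ``the counting arguments of Lemmas \ref{lem_explore} and \ref{lem_exploit_errors} carry over essentially verbatim.'' Step (b) of \eqr{eq-10} in the proof of Lemma \ref{lem_explore} relies on $\E[X_i\mid\F_{t_i}]\ge 0$ for \emph{every} phase, not just the effective ones; this is what lets the signal $(1-\gamma)\epsilon_\kappa F_t(\nu,\kappa)/2$ survive untouched. In your reduction a non-effective phase of a bad centre has conditional mean bounded below only by $-(1-\gamma)(\epsilon+\epsilon/8)$ (all you can say is $V^\pi_{\bnu}\ge V^\pi_{\nu_{i^*}}\ge V^\pi_\mu-\epsilon-\epsilon/8$), and in addition your widened test demands an extra $\alpha_j(1-\gamma)\Theta(\epsilon)$ before it fires. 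Both penalties grow linearly in $E_t(\nu,\kappa)$, whereas Lemma \ref{lem_tech1} and \eqr{hard_cond2} guarantee only $F_t(\nu,\kappa)\gtrsim E_t(\nu,\kappa)/\sqrt{16N}$ for the pigeonholed centre. Writing $\lambda(\alpha_j)=\sqrt{2\alpha_j\log(\alpha_j/\delta_1)}$, elimination is then guaranteed only when $(1-\gamma)\epsilon_\kappa E_t(\nu,\kappa)/(2\sqrt{16N}) \gtrsim 2\lambda(\alpha_j)+(1-\gamma)\Theta(\epsilon)E_t(\nu,\kappa)$, i.e.\ when $\epsilon_\kappa=\Omega(\epsilon\sqrt N)$. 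So bad centres explored at gaps below order $\epsilon\sqrt N$ may never be removed: either $\emax$ fails to bound the exploration phases or the accuracy degrades by a factor of $\sqrt N$ --- not the constant factor your closing paragraph asserts. Repairing this needs a new ingredient, not bookkeeping: for instance, a different pigeonhole showing some centre has a \emph{constant} effective fraction together with an $E_{t,\kappa}/\Theta(N)$ share of the phases (this does follow from $\sum_\nu F_t(\nu,\kappa)\ge E_{t,\kappa}/2$), combined with exploring only at gaps exceeding a sufficiently large constant multiple of the cover radius. (To be fair, the paper's one-line ``small modifications'' glosses over the same elimination-side difficulty, but its infimum construction at least leaves the test untouched rather than compounding the slack; note also that your exploitation bound is really $5\epsilon/8+2\epsilon$ rather than $2\epsilon$, since Lemma \ref{lem_exploit} must be transferred from $\nu_{i^*}$ to $\mu$ at the cost of the metric bound twice.)
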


The main modification is to define statistics on elements of the cover, rather than specific environments.
\begin{enumerate}
\item Let $U_1, \cdots, U_N$ be an $\epsilon$-cover of $\M$.
\item At each time-step choose $\uU$ and $\oU$ such that $\unu \in \underline U$ and $\bnu \in \overline U$.
\item Define statistics $\left\{X \right\}$ on elements of the cover, rather than environments, by 
\eq{
X(\uU, \kappa)_{E(\uU,\kappa)} &:= \inf_{\unu \in \uU} (1 - \gamma)(R - V^\pi_{\unu}(h)) \\
X(\oU, \kappa)_{E(\oU,\kappa)} &:= \inf_{\bnu \in \oU} (1 - \gamma)(V^\pi_{\bnu}(h) - R)
}
\item If there exists a $U$ where the test fails then eliminate all environments in that cover.
\end{enumerate}
The proof requires only small modifications to show that with high probability the $U$ containing the true environment is never
discarded, while those not containing the true environment are if tested sufficiently often.

\section{Unbounded Environment Classes}

If the environment class is non-compact then we cannot in general expect finite sample-complexity bounds. Indeed, even
asymptotic results are usually not possible.

\begin{theorem}
There exist non-compact $\M$ for which no agent has a finite \pac{} bound.
\end{theorem}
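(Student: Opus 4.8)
The plan is to construct a specific non-compact class $\M$ on which any agent must fail infinitely often (or fail with constant probability) for arbitrarily large sample-complexity, so that no finite PAC bound $\Lambda(\epsilon,\delta)$ can hold. The natural approach is a diagonalization/needle-in-a-haystack argument: build a family of environments that are pairwise far apart in reward behaviour but indistinguishable from a ``base'' environment until the agent has explored for an unboundedly long time. Because the class is non-compact, I can make the ``distinguishing time'' arbitrarily large while keeping each environment a legitimate member of $\M$, defeating any fixed $\Lambda$.

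\textbf{Construction.} First I would fix a single action set and a base environment $\nu_0$ that yields, say, reward $0$ forever regardless of the agent's actions (or low reward on all but one hidden ``good'' action at a hidden ``good'' time). Then, for each $n \in \N$, I would define an environment $\nu_n$ that agrees with $\nu_0$ on all histories of length less than $n$, but at time $n$ offers a large reward (close to $1/(1-\gamma)$ in value) obtainable only by taking a particular action that the agent cannot identify from the history so far. The key point is that the optimal value $V^*_{\nu_n}$ is bounded well away from the value achievable by an agent that behaves like it would under $\nu_0$, so any agent that fails to ``guess'' the special action at time $n$ is more than $\epsilon$-suboptimal. I would verify that $\M := \{\nu_0, \nu_1, \nu_2, \dots\}$ is \emph{not} compact under the pseudo-metric $d(\nu_1,\nu_2) = \sup_{h,\pi}|V^\pi_{\nu_1}(h) - V^\pi_{\nu_2}(h)|$: the sequence $\nu_n$ has no convergent subsequence because each $\nu_n$ stays at distance bounded below from $\nu_0$ and from each other (their distinguishing rewards occur at different times), so no finite $\epsilon$-cover exists.

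\textbf{The failure argument.} Fix any agent $\pi$ and any candidate bound $\Lambda$ together with accuracy $\epsilon$ and confidence $\delta$. Since $\pi$ is a fixed function of history, its behaviour on the common prefix shared by $\nu_0$ and all $\nu_n$ is \emph{determined before the environments diverge}. I would choose $n$ large enough (larger than anything $\Lambda$ depends on, exploiting non-compactness to guarantee such $\nu_n \in \M$ exists) that, running $\pi$ against $\nu_n$, the agent cannot have collected enough information to act optimally at the critical time: with probability at least $\delta$ it takes a suboptimal action and thereafter remains $\epsilon$-suboptimal for more than $\Lambda$ steps (the discounting ensures a single missed high-reward opportunity degrades the value by a constant amount over a whole block of time-steps). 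The cleanest way to force the agent's ignorance is an information-theoretic or adversarial selection: among the environments that are indistinguishable from $\nu_0$ up to time $n$, the agent's fixed policy commits to one action at the critical step, and I pick the $\nu_n$ whose ``good'' action differs from that commitment.

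\textbf{Main obstacle.} The hard part will be making the adversarial selection rigorous while respecting that the agent sees rewards and observations, not just actions — so the environments must be genuinely indistinguishable (identical conditional distributions $P_{\nu_n,\pi}$) on the shared prefix, yet diverge sharply afterward, and I must ensure the reward/observation alphabets stay finite as required by the notation section. I would handle this by making the observation at the critical step reveal nothing about which action is ``good'' (all $\nu_n$ emit the same observations until the reward is actually paid), so that no policy can do better than chance at the critical step; then averaging over a suitable finite sub-family, or picking the worst $\nu_n$ for the given $\pi$, yields failure probability bounded below by a constant independent of $\Lambda$. A secondary subtlety is confirming the $\epsilon$-suboptimality persists for strictly more than $\Lambda$ time-steps rather than a single step, which follows because after a missed opportunity the value gap $V^*_{\nu_n}(h_t) - V^\pi_{\nu_n}(h_t)$ stays above $\epsilon$ for an entire horizon-length block, and this block can be made to exceed $\Lambda$ by the same choice of large $n$.
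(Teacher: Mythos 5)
There is a genuine gap, and it is in the failure argument rather than in the technical obstacles you flag. A \pac{} bound controls the \emph{count} $L_{\mu,\pi}^\epsilon$ of time-steps at which $V^*_\mu(h_t) - V^\pi_\mu(h_t) > \epsilon$, not the \emph{time} at which those failures occur. In your class each $\nu_n$ poses a single hidden challenge at time $n$: before the challenge, only the last $O\left(\frac{1}{1-\gamma}\log\frac{1}{\epsilon(1-\gamma)}\right)$ steps can have a value gap above $\epsilon$ (earlier than that, the anticipated reward is discounted so heavily that $V^*_{\nu_n}(h_t) \leq \epsilon$), and after the challenge the continuation is determined (heaven or an absorbing miss), so the gap is zero whether or not the agent succeeded. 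Hence $L^\epsilon_{\nu_n,\pi} = O(d)$ holds deterministically for every agent and every $n$, with $d$ the effective horizon depending only on $\epsilon$ and $\gamma$; consequently $\Lambda(\epsilon,\delta) := C d$ is a finite \pac{} bound for your class, and the theorem is not established by it. Your key lever --- ``choose $n$ larger than anything $\Lambda$ depends on'' --- only moves the failures later, it does not create more of them, so your closing claim that the suboptimal block ``can be made to exceed $\Lambda$ by the same choice of large $n$'' is false: the block's length is the effective horizon, which is independent of $n$. A secondary defect: with exactly one environment per divergence time, the good action at time $n$ can be read off the known class $\M$, so the agent that at time $t$ plays the good action of $\nu_t$ is optimal in \emph{every} member of your class; the adversarial selection needs several environments per challenge time with distinct good actions, which your construction paragraph does not provide.

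The repair is to give each single environment \emph{unboundedly many} hidden choices: index environments by arbitrary assignments $f$ of a good action to every challenge (so each environment re-poses a fresh hidden challenge infinitely often), and for a fixed policy $\pi$ diagonalize by choosing $f$ to disagree with $\pi$'s committed action at every challenge, forcing $L^\epsilon_{\mu,\pi} = \infty$. Any class in which each environment contains only finitely many bits of hidden, payoff-relevant information admits a finite \pac{} bound by the counting argument above, so this unbounded-per-environment richness is unavoidable. Note, however, that once repaired your construction collapses to essentially the paper's own proof, which is a two-line diagonalization: take $\M$ to be the set of all environments (or all computable environments), observe that it is non-compact, and note that for any policy $\M$ contains an environment tuned so that the policy acts sub-optimally infinitely often, deferring details to the negative results it cites.
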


The obvious example is when $\M$ is the set of all environments. Then for any
policy $\M$ includes an environment that is tuned to ensure the policy acts sub-optimally infinitely often.
A more interesting example is the class of all computable environments, which is non-compact and also does not admit
algorithms with uniform finite sample-complexity. See negative results by \citet{HL11b} for counter-examples.

\section{Lower Bound}

We now turn our attention to the lower bound. In specific cases, the bound in Theorem \ref{thm_main} is
very weak. For example, if $\M$ is the class of finite MDPs with $|S|$ states then a natural covering leads to a \pac{} 
bound with exponential dependence on the state-space
while it is known that the true dependence is at most quadratic.
This should not be surprising since information about the transitions for
one state gives information about a large subset of $\M$, not just a single environment.
We show that the bound in Theorem \ref{thm_main} is unimprovable for general
environment classes except for logarithmic factors. That is, there exists a class of 
environments where Theorem \ref{thm_main} is nearly tight.

The simplest counter-example is a set of MDPs with four states, $S = \left\{0,1, \oplus, \ominus \right\}$ and $N$ 
actions, $A = \left\{a_1,\cdots, a_N\right\}$. 
The rewards and transitions are depicted in Figure \ref{fig:counter} where the transition probabilities depend on the action.
Let $\M := \left\{\nu_1, \cdots, \nu_N\right\}$ where for $\nu_k$ we set 
$\epsilon(a_i) = \ind{i = k}\epsilon(1 - \gamma)$. Therefore in environment $\nu_k$, $a_k$ is the optimal action in state $1$.
$\M$ can be viewed as a set of bandits with rewards in $(0, 1/(1 - \gamma))$. In the bandit domain tight lower bounds
on sample-complexity are known and given in \citet{MT04}. These results can be applied as in \citet{Str09} and \citet{LH12} to show that
no algorithm has sample-complexity
less than $O({N \over \epsilon^2(1 - \gamma)^3} \log{1 \over \delta})$.
\begin{figure}[H]
\centering
\begin{tikzpicture}[->,>=stealth',shorten >=1pt,auto,node distance=3cm, semithick,scale=0.7]
\tikzstyle{state} = [circle,draw,minimum width=0.8cm, minimum height=0.8cm,inner sep=1pt]
\node[state] (error) {$\stackrel{1} {r = 0}$};
\node[state] (minus) [right of=error] {$\stackrel{\ominus} {r = 0}$};
\node[state] (plus) [below of=error] {$\stackrel{\oplus} {r = 1}$};
\node[state] (hard) [right of=plus] {$\stackrel{0} {r = 0}$};
\path (hard) edge node[swap] {$1 - p$} (error)
      (hard) edge[loop below] node {$p := 1/(2 - \gamma)$} (hard)
      (error) edge node {${1 \over 2} - \epsilon(a)$} (minus)
      (error) edge node[right=-2pt] {${1 \over 2} + \epsilon(a)$} (plus)
      (minus) edge[loop above] node[swap] {$q := 2 - 1/\gamma$} (minus)
      (minus) edge node {$1-q$} (hard)
      (plus) edge[loop below] node[swap] {$q$} (plus)
      (plus) edge node[swap] {$1 - q$} (hard)
;
\end{tikzpicture}
\caption{Counter-example}\label{fig:counter}
\end{figure}
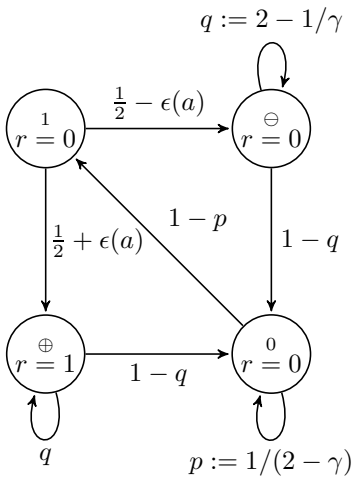

\section{Conclusions}

\subsubsect{Summary}
The Maximum Exploration Reinforcement Learning algorithm was presented. For finite classes of arbitrary environments a sample-complexity
bound was given that is linear in the number of environments. We also presented lower bounds that show that in general this
cannot be improved except for logarithmic factors. Learning is also possible for compact classes with
the sample complexity depending on the size of the smallest $\epsilon$-cover where the distance between two environments is the
difference in value functions over all policies and history sequences.
Finally, for non-compact classes of environments sample-complexity bounds are typically not possible.

\subsubsect{Running time}
The running time of \merl{} can be arbitrary large since computing the policy maximising $\Delta$ depends on the environment class used.
Even assuming the distribution of observation/rewards given the history can be computed in constant time, the values of optimal
policies can still only be computed in time exponential in the horizon. 

\subsubsect{Future work}
\merl{} is close to unimprovable in the sense that there exists a class of environments where the upper bound is nearly
tight. On the other hand, there are classes of environments where the bound of Theorem \ref{thm_main} scales badly compared to the
bounds of tuned algorithms (for example, finite state MDPs).
It would be interesting to show that \merl{}, or a variant thereof, actually performs comparably to the optimal
sample-complexity even in these cases. This question is likely to be subtle since there are unrealistic classes of environments where the
algorithm minimising sample-complexity should take actions leading directly to a trap where it receives low reward eternally, but is never (again)
sub-optimal. Since \merl{} will not behave this way it will tend to have poor sample-complexity bounds in this type of environment class.
This is really a failure of the sample-complexity optimality criterion rather than \merl{}, since jumping into non-rewarding traps is
clearly sub-optimal by any realistic measure.

\subsubsect{Acknowledgements} This work was supported by ARC grant DP120100950.


{\addcontentsline{toc}{section}{\refname}
\bibliographystyle{plainnat}
 \small\bibliography{agi-pac}
}

\appendix
\section{Technical Results}

\begin{lemma}\label{lem_tech1}
Let $x,y \in [0,1]^N$ satisfy $\sum_{i=1}^N y_i = 1$ and $\sum_{i=1}^N x_i y_i \geq 1/2$. Then 
$\max_i x_i^2 y_i > 1/(4N)$.
\end{lemma}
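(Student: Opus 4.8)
The plan is to argue by contradiction: suppose that $x_i^2 y_i \le 1/(4N)$ for every $i$, and show this is incompatible with $\sum_{i=1}^N x_i y_i \ge 1/2$. The intuition is that the constraint $\sum_i y_i = 1$ limits how much total mass the weights $y_i$ can carry, so if every coordinate contributes only a little to $\max_i x_i^2 y_i$, then the weighted average $\sum_i x_i y_i$ cannot reach $1/2$.

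First I would rewrite the per-coordinate assumption in a form that sums nicely. Since $x_i, y_i \ge 0$, the bound $x_i^2 y_i \le 1/(4N)$ is equivalent (by taking square roots) to $x_i \sqrt{y_i} \le 1/(2\sqrt{N})$. Multiplying both sides by $\sqrt{y_i}$ gives the key pointwise estimate $x_i y_i \le \frac{1}{2\sqrt{N}} \sqrt{y_i}$, in which the dependence on $y$ has been reduced to $\sqrt{y_i}$.

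Next I would sum over $i$ and control $\sum_i \sqrt{y_i}$ using $\sum_i y_i = 1$. By Cauchy--Schwarz, $\sum_{i=1}^N \sqrt{y_i} = \sum_{i=1}^N \sqrt{y_i}\cdot 1 \le \sqrt{\sum_{i=1}^N y_i}\,\sqrt{\sum_{i=1}^N 1} = \sqrt{N}$. Combining this with the pointwise estimate yields $\sum_i x_i y_i \le \frac{1}{2\sqrt{N}}\cdot\sqrt{N} = \tfrac12$, contradicting the hypothesis and thereby establishing $\max_i x_i^2 y_i \ge 1/(4N)$.

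The one delicate point, and the step I expect to be the main obstacle, is the strictness of the stated conclusion. The chain above delivers the non-strict bound $\max_i x_i^2 y_i \ge 1/(4N)$, which is in fact all that is needed where the lemma is invoked (the application only asserts the corresponding ratio is $\ge 1/(4N)$). Equality throughout would force simultaneously the Cauchy--Schwarz equality condition $y_i = 1/N$ for all $i$ and saturation of every pointwise bound $x_i^2 y_i = 1/(4N)$, i.e. $x_i \equiv 1/2$; this is the unique degenerate configuration realising $\sum_i x_i y_i = 1/2$ exactly. I would therefore recover the strict inequality either by noting that the relevant sum exceeds $1/2$ strictly in the integer-valued setting of the application, or by explicitly excluding this single boundary point. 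Everything else is a routine two-line computation.
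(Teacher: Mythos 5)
Your proof is correct in substance but takes a genuinely different route from the paper's. The paper argues directly, with no contradiction: it writes $x_i^2 y_i = x_i y_i - x_i y_i(1-x_i)$, bounds $x_i(1-x_i) \le 1/4$ for $x_i \in [0,1]$, and uses $\sum_i y_i = 1$ to conclude $\sum_{i} x_i^2 y_i \ge \tfrac12 - \tfrac14 = \tfrac14$, after which ``max $\ge$ average'' gives an $i$ with $x_i^2 y_i \ge 1/(4N)$. Your argument instead negates the conclusion, rewrites the per-coordinate bound as $x_i y_i \le \sqrt{y_i}/(2\sqrt N)$, and sums via Cauchy--Schwarz ($\sum_i \sqrt{y_i} \le \sqrt N$). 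The paper's route yields the stronger intermediate fact that the whole weighted sum $\sum_i x_i^2 y_i$ is at least $1/4$ (the $1/N$ loss only enters at the final averaging step); your route makes the equality case completely transparent, since Cauchy--Schwarz equality forces $y_i \equiv 1/N$ and pointwise saturation then forces $x_i \equiv 1/2$ --- an analysis the paper never carries out.

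On strictness, your instinct is right, and your treatment is in fact more careful than the paper's. The configuration $x_i \equiv 1/2$, $y_i \equiv 1/N$ satisfies both hypotheses and has $\max_i x_i^2 y_i = 1/(4N)$ exactly, so the strict inequality ``$>$'' in the lemma statement is false as written; the paper's own proof accordingly concludes only ``$\ge 1/(4N)$'', and only the non-strict form is used where the lemma is invoked, in Equation (\ref{hard_cond1}) of the proof of Lemma \ref{lem_explore}. One wording slip in your middle paragraph should be fixed: from the non-strict assumption $x_i^2 y_i \le 1/(4N)$ you derive $\sum_i x_i y_i \le 1/2$, which does not contradict the hypothesis $\sum_i x_i y_i \ge 1/2$. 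The clean repair is either to negate the non-strict conclusion instead (assume $x_i^2 y_i < 1/(4N)$ for all $i$, note that some $y_j > 0$, so the summed inequality becomes strict and $\sum_i x_i y_i < 1/2$, a genuine contradiction), or to run the equality analysis you sketch in your final paragraph. Since you do supply that analysis, the argument is complete for the non-strict bound, which is all the paper needs.
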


\begin{proof}
The result essentially follows from the fact that a maximum is greater than an average.
\eq{
\sum_{i=1}^N x_i^2 y_i &= \sum_{i=1}^N x_i y_i - \sum_{i=1}^N x_i y_i(1 - x_i) \\
&\geq{1 \over 2} - \sum_{i=1}^N x_iy_i(1 - x_i) 
\geq{1 \over 2} - \sum_{i=1}^N {y_i \over 4} = {1 \over 4}
}
Therefore there exists an $i$ such that $x_i^2 y_i \geq 1 / (4N)$ as required.
\end{proof}

\begin{lemma}\label{lem_w}
Let $a, b > 2$ and $x := 4a (\log ab)^2$. Then $x \geq a \log bx$.
\end{lemma}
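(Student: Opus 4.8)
The plan is to establish the inequality $x \geq a \log bx$ for $x := 4a(\log ab)^2$ by a direct substitution followed by elementary manipulations, using the hypotheses $a, b > 2$ to control the logarithmic terms. First I would substitute the definition of $x$ into the right-hand side, so that the claim becomes
\eq{
4a(\log ab)^2 \geq a \log\left(4ab(\log ab)^2\right).
}
Dividing through by $a > 0$, it suffices to show $4(\log ab)^2 \geq \log(4ab(\log ab)^2)$. The natural next step is to expand the right-hand logarithm additively as $\log 4 + \log ab + 2\log\log ab$, reducing the goal to bounding each of these three pieces by a constant fraction of $4(\log ab)^2$.

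Writing $u := \log ab$ for brevity, the target inequality reads $4u^2 \geq \log 4 + u + 2\log u$. Since $a, b > 2$ we have $ab > 4$, hence $u = \log ab > \log 4 > 1$, so $u$ is bounded away from $0$ and in particular $u > 1$. With $u > 1$ the three terms on the right are each easily dominated: $u \leq u^2$, $\log 4 \leq u^2$ (as $u^2 > 1$ and $\log 4 \approx 1.39$ may need $u$ slightly larger, which I would check), and $2\log u \leq 2u \leq 2u^2$. I expect the crude bound $\log 4 + u + 2\log u \leq 4u^2$ to follow for all $u > 1$ after collecting terms, though the constant $\log 4$ means I would verify the threshold carefully rather than relying on $u > 1$ alone; tightening to $u > \log 4$ (which holds since $ab > 4$) gives ample slack.

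The main obstacle, such as it is, is purely the bookkeeping of constants: ensuring that the lower bound $u > \log 4$ coming from $a, b > 2$ is genuinely strong enough to absorb the additive $\log 4$ together with the linear and doubly-logarithmic terms into $4u^2$. This is not conceptually difficult, but it is the one place where a careless estimate could fail, so I would handle it by reducing everything to the single-variable inequality $4u^2 - u - 2\log u - \log 4 \geq 0$ for $u > \log 4$ and confirming it by noting the left side is increasing and nonnegative at the endpoint. The factor $4$ in the definition of $x$ is clearly generous, which is why the argument goes through with room to spare.
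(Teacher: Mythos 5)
Your proposal is correct. Note that the paper states Lemma \ref{lem_w} without any proof, so there is no argument of the authors' to compare against; your elementary verification fills that gap. Concretely, after substituting and dividing by $a$, the claim becomes $4u^2 \geq \log 4 + u + 2\log u$ with $u := \log ab > \log 4 > 1$, and each term is dominated as you suggest: $\log 4 < (\log 4)^2 < u^2$, $u \leq u^2$, and $2\log u \leq 2u \leq 2u^2$, which sum to exactly $4u^2$, so the threshold concern you flagged resolves with room to spare.
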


\begin{lemma}\label{lem_tech2}
Let $\alpha_j := \ceil{\alpha^j}$ where $\alpha := \constalpha$. Then
$\sum_{j=1}^\infty {\alpha_j^{-1}} \leq 4\sqrt{N}$.
\end{lemma}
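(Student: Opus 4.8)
The plan is to discard the ceiling and collapse the sum to a geometric series. Since $\alpha_j = \ceil{\alpha^j} \geq \alpha^j$, each summand satisfies $\alpha_j^{-1} \leq \alpha^{-j}$, so it suffices to bound $\sum_{j=1}^\infty \alpha^{-j}$. First I would note that $\alpha = \constalpha > 1$ (immediate since $4\sqrt N - 1 < 4\sqrt N$ for every $N \geq 1$), so the common ratio $\alpha^{-1}$ lies in $(0,1)$ and the series converges. Summing the geometric series from $j=1$ gives $\sum_{j=1}^\infty \alpha^{-j} = \alpha^{-1}/(1 - \alpha^{-1}) = 1/(\alpha - 1)$.

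The remaining step is purely to substitute the explicit value of $\alpha$. With $\alpha = \frac{4\sqrt N}{4\sqrt N - 1}$ one computes $\alpha - 1 = \frac{1}{4\sqrt N - 1}$, whence $\frac{1}{\alpha - 1} = 4\sqrt N - 1 \leq 4\sqrt N$, which is exactly the claimed bound (in fact the argument delivers the slightly sharper $4\sqrt N - 1$). Chaining the two inequalities then yields $\sum_{j=1}^\infty \alpha_j^{-1} \leq \sum_{j=1}^\infty \alpha^{-j} = \frac{1}{\alpha-1} = 4\sqrt N - 1 \leq 4\sqrt N$.

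There is no genuine obstacle here: the only things to verify are that $\alpha > 1$ so the tail sum is finite, and that replacing $\ceil{\alpha^j}$ by $\alpha^j$ can only enlarge each term. It is worth emphasising in the write-up that the particular constant $\alpha = \constalpha$ is \emph{engineered} so that $1/(\alpha - 1)$ evaluates to $4\sqrt N - 1$; this is precisely the $\sqrt N$-scaling needed when the lemma is invoked to control $\sum_{\kappa \in \K}\sum_{j=1}^\infty \delta_1/\alpha_j$ in the proof of Lemma \ref{lem_model}, and it explains why the doubling trick (which would correspond to $\alpha = 2$, giving a constant sum) is deliberately replaced by this slower growth rate.
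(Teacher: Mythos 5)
Your proof is correct and takes essentially the same route as the paper's: discard the ceiling via $\alpha_j \geq \alpha^j$ and sum the resulting geometric series after substituting $\alpha = \constalpha$. The only (inessential) difference is that you evaluate the tail sum from $j=1$ exactly as $1/(\alpha-1) = 4\sqrt{N}-1$, while the paper bounds it by ${1 \over 1 - 1/\alpha} = 4\sqrt{N}$.
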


\begin{proof}
We have ${1 \over \alpha_j} \leq \left({1 \over \alpha}\right)^j < 1$.
Therefore by the geometric series,
\eq{
\sum_{j=1}^\infty {1 \over \alpha_j} \leq {1 \over 1 - {1 \over \alpha}} \equiv {1 \over 1 - {4\sqrt N - 1 \over 4\sqrt N}} = 4\sqrt{N}
}
as required.
\end{proof}

\section{Constants}
\begin{tabular}{p{1.5cm} p{6cm}}
$d$ & $\constd$ \\[0.2cm]
$\epsilon_\kappa$ & $2^{\kappa - 2} \epsilon$ \\[0.2cm]
$\gmax$ & $\constgmax$ \\[0.2cm]
$\gmaxkappa$ & $\constgmaxkappa$ \\[0.2cm]
$\emax$ & $\constemax$ \\[0.2cm]
$\emaxkappa$ & $\constemaxkappa$  \\[0.2cm]
$\alpha$ & $\constalpha$ \\[0.2cm]
$\delta_1$ & $\constdeltaone$ \\[0.2cm]
$|\K|$ & $\constK$
\end{tabular}

\section{Table of Notation}

\begin{tabular}{p{1.5cm} p{13cm}}
$N$ & number of candidate models \\[0.2cm]
$\epsilon$ & required accuracy \\[0.2cm]
$\delta$ & probability that an algorithm makes more mistakes than its sample-complexity \\[0.2cm]
$t$ & time-step \\[0.2cm]
$h_t$ & history at time-step $t$ \\[0.2cm]
$V^\pi_\mu(h)$ & value of policy $\pi$ in environment $\mu$ given history $h$ \\[0.2cm]
$d$ & effective horizon \\[0.2cm]
$\mu$ & true environment \\[0.2cm]
$\nu$ & an environment \\[0.2cm]
$\bnu,\unu$ & models achieving upper and lower bounds on the value of the exploration policy \\[0.2cm]
$\gamma$ & discount factor Satisfies $\gamma \in (0, 1)$ \\[0.2cm]
$\emaxkappa$ & high probability bound on the number of $\kappa$-exploration phases \\[0.2cm]
$\emax$ & high probability bound on the number of exploration phases \\[0.2cm]
$E_\infty$ & number of exploration phases \\[0.2cm]
$E_\infty(\nu,\kappa)$ & number of $\nu$-exploration phases \\[0.2cm]
$E_t(\nu,\kappa)$ & number of $(\nu,\kappa)$-exploration phases at time-step $t$ \\[0.2cm]
$F_t(\nu,\kappa)$ & number of effective $(\nu,\kappa)$-exploration phases at time-step $t$ \\[0.2cm]
$X(\nu,\kappa)_i$ & $i$th test statistic for $(\nu,\kappa)$ pair 
\end{tabular}

\end{document}